\newtheorem{lemma}{Lemma}
\newtheorem{prop}{Proposition}
\def\x{\boldsymbol{x}}
\def\u{\mathbf{u}}
\def\W{\mathbf{W}}
\def\X{\mathbf{X}}
\newcommand{\fl}[1] {\textcolor{black}{#1}}
\begin{document}
	
\title{\huge\bf On Expressivity and Trainability of Quadratic Networks}

\author{Feng-Lei Fan$^{1}$, \textit{Member, IEEE}, Mengzhou Li$^{2}$, \textit{Student Member, IEEE}, Fei Wang$^{1*}$, Rongjie Lai$^{3*}$, \\
Ge Wang$^{2*}$, \textit{Fellow, IEEE} 
\thanks{*These authors are co-corresponding authors.}
\thanks{$^{1}$Drs. Feng-Lei Fan (fanf2@rpi.edu) and Fei Wang (few2001@med.cornell.edu) are with Department of Population Health Sciences, Weill Cornell Medicine, Cornell University, New York, NY, USA }
\thanks{$^{2}$Mengzhou Li and Dr. Ge Wang (wangg62@rpi.edu) are with AI-based X-ray Imaging System (AXIS) Lab, Biomedical Imaging Center, Rensselaer Polytechnic Institute, Troy 12180, NY, USA}
\thanks{$^{3}$Dr. Rongjie Lai (lair@rpi.edu) is with Department of Biomedical Engineering, Rensselaer Polytechnic Institute, Troy, NY 12180, USA}        %
}

\markboth{Journal of \LaTeX\ Class Files,~Vol.~14, No.~8, August~2021}%
{Shell \MakeLowercase{\textit{et al.}}: A Sample Article Using IEEEtran.cls for IEEE Journals}

	
\maketitle
	
\begin{abstract}
Inspired by the diversity of biological neurons, quadratic artificial neurons can play an important role in deep learning models. The type of quadratic neurons of our interest replaces the inner-product operation in the conventional neuron with a quadratic function. Despite promising results so far achieved by networks of quadratic neurons, there are important issues not well addressed. Theoretically, the superior expressivity of a quadratic network over either a conventional network or a conventional network via quadratic activation
is not fully elucidated, which makes the use of quadratic networks not well grounded. Practically, although a quadratic network can be trained via generic backpropagation, it can be subject to a higher risk of collapse than the conventional counterpart. To address these issues, we first apply the spline theory and a measure from algebraic geometry to give two theorems that demonstrate better model expressivity of a quadratic network than the conventional counterpart with or without quadratic activation. Then, we propose an effective training strategy referred to as ReLinear to stabilize the training process of a quadratic network, thereby unleashing the full potential in its associated machine learning tasks. Comprehensive experiments on popular datasets are performed to support our
findings and confirm the performance of quadratic deep learning. We have shared our code in \url{https://github.com/FengleiFan/ReLinear}.
\end{abstract}
	
\begin{IEEEkeywords}
Neuronal diversity, quadratic neurons, quadratic networks, expressivity, training strategy
\end{IEEEkeywords}
	
\section{INTRODUCTION}

\IEEEPARstart{I}n recent years, a plethora of deep artificial neural networks have been developed with impressive successes in many mission-critical tasks \cite{fuchs2021super, you2019ct}. However, up to date the design of these networks focuses on architectures, such as shortcut connections \cite{he2016deep, fan2018sparse}. Indeed, neural architecture search \cite{liu2018progressive} is to find networks of similar topological types. Almost exclusively, the mainstream network models are constructed with neurons of the same type, which is composed of two parts: inner combination and nonlinear activation (We refer to such a neuron as a conventional neuron and a network made of these neurons as a conventional network hereafter). Despite that a conventional network does simulate certain important aspects of a biological neural network such as a hierarchical representation \cite{lecun2015deep}, attention mechanism \cite{air}, and so on, a conventional network and a biological neural system are fundamentally different in terms of neuronal diversity and complexity. In particular, a biological neural system coordinates numerous types of neurons which contribute to all kinds of intellectual behaviors \cite{thivierge2008neural}. Considering that an artificial network is invented to mimic the biological neural system, the essential role of neuronal diversity should be taken into account in deep learning research. 

Along this direction, the so-called quadratic neurons \cite{fan2019quadratic} were recently proposed, which replace the inner product in a conventional neuron with a quadratic operation (Hereafter, we call a neural network made of quadratic neurons as a quadratic neural network, QNN). A single quadratic neuron can implement XOR logic operation, which is not possible for an individual conventional neuron. Therefore, there must exist a family of functions that can be represented by a quadratic neuron, but cannot be represented by a conventional neuron, suggesting a high expressivity of quadratic neurons. Furthermore, the superior expressivity of quadratic networks over conventional networks is confirmed by a theorem that given the same structure there exists a class of functions that can be expressed by quadratic networks with a polynomial number of neurons, and can only be expressed by conventional networks with an exponential number of neurons \cite{fan2020universal}. In addition, a quadratic autoencoder was developed for CT image denoising and produced desirable denoising performance \cite{fan2019quadratic}. 

In spite of the promising progress achieved by quadratic networks, there are still important theoretical and practical issues unaddressed satisfactorily. First of all, the superiority of quadratic networks in the representation power can be analyzed in a generic way instead of just showing the superiority in the case of a special class of functions. Particularly, we are interested in comparing a quadratic network with both a conventional network and a conventional network with quadratic activation \cite{du2018power,mannelli2020optimization,xu2021robust}. Also, the training process of a quadratic network is subject to a higher risk of collapse than the conventional counterpart. Specifically, given a quadratic network with $L$ layers, its output function is a polynomial of $2^L$ degrees. Such a degree of freedom may dramatically amplify the slight change in the input, causing oscillation in the training process. We can use a univariate example to illustrate the specific reasons that cause training curve oscillation. Let us build a quadratic and a conventional network for the univariate input. The network structure is 1-10-10-10-10-1, and the activation function is ReLU. All parameters of the quadratic and conventional networks are initialized with the normal distribution $\mathcal{N}(0,\sigma^2)$. Figure \ref{Fig_example_oscillation} shows the outputs of two networks when $\sigma=0.01,0.02,0.03$, respectively. It can be seen that a slight change in $\sigma$ can result in a huge magnitude difference in the output of the quadratic network, while such a change only affects the output of the conventional network moderately. Therefore, it is essential to derive an effective strategy to stablize the training process of a quadratic network.

\begin{figure}[htb]
\center{\includegraphics[width=0.8\linewidth] {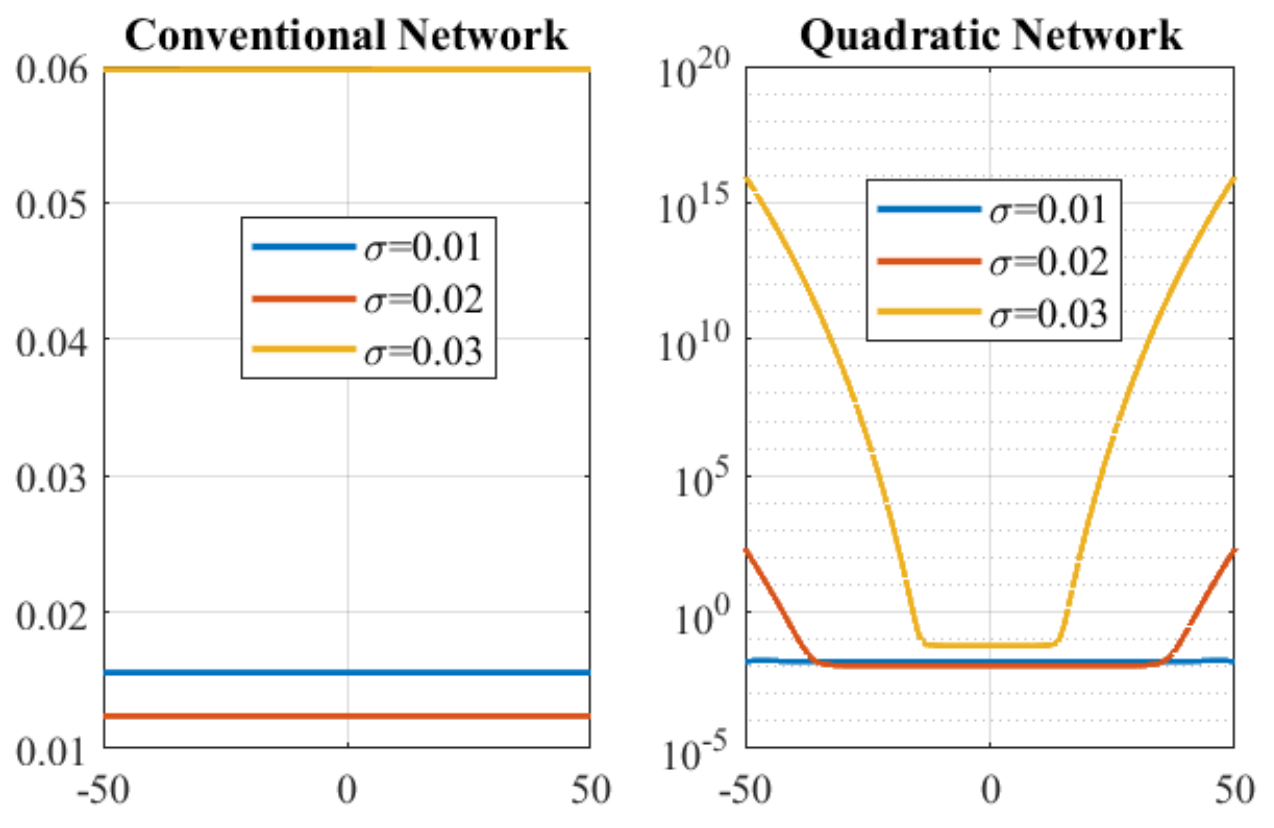}}
\caption{Illustration of why oscillation occurs in quadratic networks. The network structure is 1-10-10-10-10-1, and the activation function is ReLU. All parameters of the quadratic and conventional networks are initialized with the normal distribution $\mathcal{N}(0,\sigma^2)$, where $\sigma$ is set to $0.01,0.02,0.03$, respectively.}
\vspace{-0.5cm}
\label{Fig_example_oscillation}
\end{figure}

To address the above issues, here we first present two theorems to reveal the superiority of a quadratic network in terms of model expressivity over either the conventional network or the conventional network with quadratic activation. The first theorem utilizes spline theory to compare the model expressivity of a quadratic network with that of a conventional network. Suppose that a ReLU activation is used, a conventional network outputs a piecewise linear function, and a quadratic network defines a piecewise polynomial function. According to spline theory, the approximation with a piecewise polynomial function is substantially more accurate than that with piecewise linear functions. Correspondingly, a quadratic network enjoys a better approximation accuracy. The other theorem is based on a measure in algebraic geometry to show that a quadratic network is more expressive than a conventional network with quadratic activation, which suggests that a conventional network with quadratic activation is not optimal to leverage quadratic mapping for deep learning. 

 \begin{figure}[htb]
\center{\includegraphics[width=0.9\linewidth] {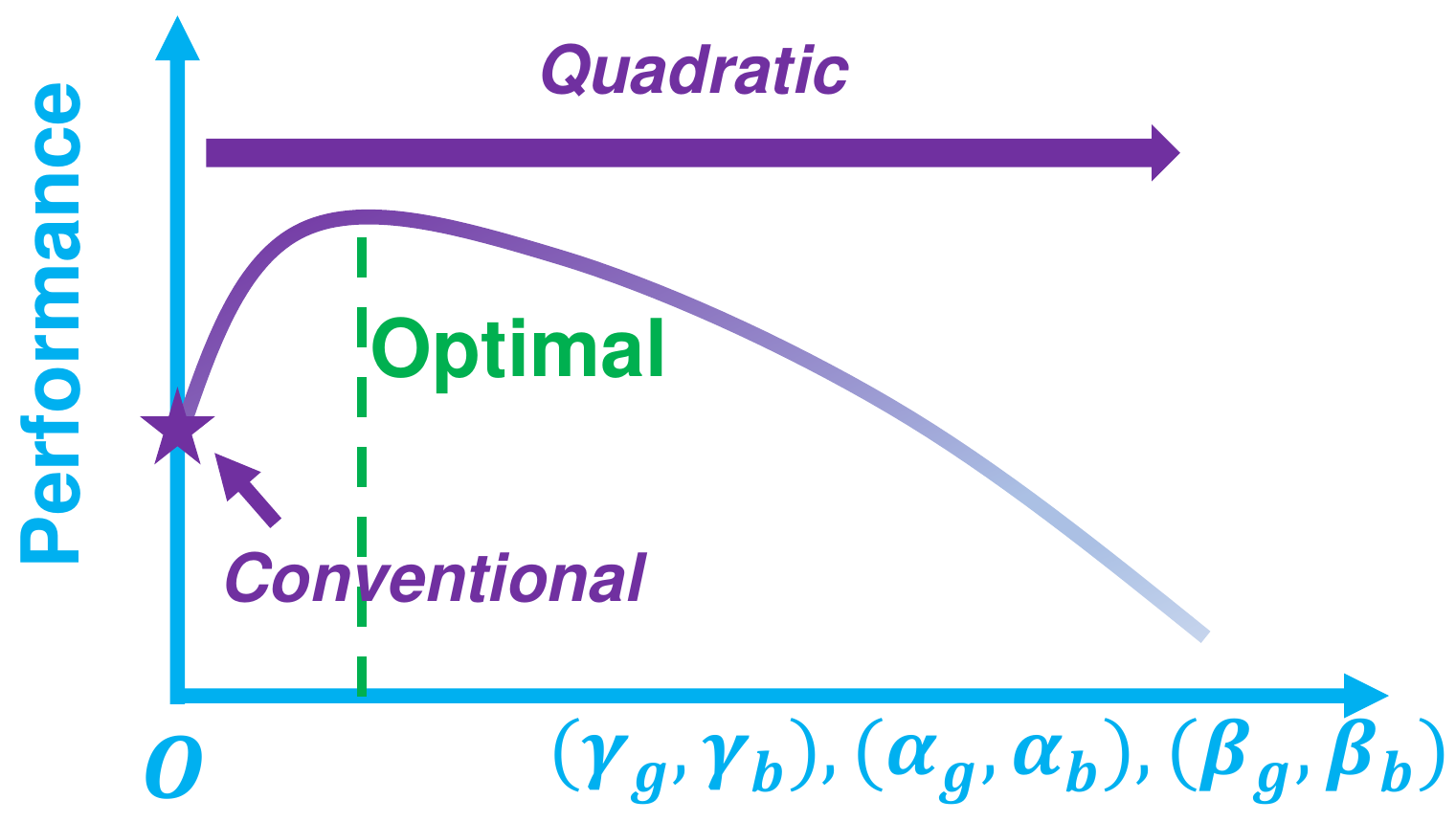}}
\caption{The performance of a quadratic network trained using the proposed \textit{ReLinear} method, with an observed improvement than the conventional network of the same structure. $(\gamma_g,\gamma_b)$, $(\alpha_g,\alpha_b)$, and $(\beta_g,\beta_b)$ are hyperparameters of \textit{ReLinear}. As these hyperparameters increases from $0$, the trained model transits from the conventional model to the quadratic, and the model's performance reaches the optimality.} 
\label{fig:guaranteed}
\end{figure}

To unleash the full potential of a quadratic network, as Figure \ref{fig:guaranteed} shows, we propose a novel training strategy referred to as \textit{ReLinear} (Referenced Linear initialization) to stabilize the training process of a quadratic network, with which we let each quadratic neuron evolve from a conventional neuron to a quadratic neuron gradually. Moreover, regularization is imposed to control nonlinear terms of a quadratic neuron to favor a low-order polynomial fitting. As a result, not only the training process is stabilized but also a quadratic network so trained can yield a performance gain compared to the conventional network of the same structure. Furthermore, encouraged by the success of ReZero (residual with zero initialization) in training a residual network \cite{bachlechner2020rezero}, we merge our ReLinear and ReZero to train a quadratic residual network progressively. Finally, we evaluate the ReLinear strategy in comprehensive experiments on well-known data sets.

\textbf{Main Contributions.} In this paper, we present two theorems to demonstrate the superiority of quadratic networks in functional expressivity. Our results show not only that a quadratic network is powerful in the deep learning armory but also that a network with quadratic activation is sub-optimal. Of great practical importance, we propose a novel training strategy to optimize quadratic networks. Finally, we conduct comprehensive experiments to demonstrate that the quadratic network trained with the proposed training strategy can perform competitively on well-known datasets.

\section{RELATED WORK}
\textbf{Polynomial networks} were investigated in the late 90s. The idea of polynomial networks can be traced back to the Group Method of Data Handling (GMDH \cite{ivakhnenko1971polynomial}), which learns gradually a complicated feature extractor:
\begin{equation}
\begin{aligned}
     Y(\x_1,\cdots,\x_n) =& a_0 + \sum_i^n a_i \x_i + \sum_i^n \sum_j^n a_{ij} \x_i \x_j \\
    & + \sum_i^n \sum_j^n \sum_k^n a_{ijk} \x_i \x_j \x_k + \cdots ,
\end{aligned}
\end{equation}
where $\x_i$ is the $i$-th input variable, and $a_i, a_{ij}, a_{ijk},\cdots$ are coefficients. Usually, this model is terminated at the second-order terms to avoid nonlinear explosion for high-dimensional inputs. The GMDH is thought of as one of the first deep learning models in the survey paper \cite{schmidhuber2015deep}. Furthermore, based on GMDH, the so-called higher-order unit was defined in \cite{poggio1975optimal,giles1987learning, lippmann1989pattern} whose output is given by 
\begin{equation}
    y=\sigma(Y(\x_1,\cdots,\x_n)), 
\end{equation}
where $\sigma(\cdot)$ is a nonlinear activation function. To maintain the power of high-order units while reducing the number of weights in high-order terms, Shin \textit{et al.} reported the pi-sigma network \cite{shin1991pi}, which is formulated as
\begin{equation}
    h_{ji} = \sum_k \omega_{kji} \x_k + \theta_{ji} \ \ \mathrm{and} \ \ y_i = \sigma(\prod_j h_{ji}), 
\label{HOunits}    
\end{equation}
where $h_{ji}$ is the output of the $j$-th sigma unit for the $i$-th output element $y_i$, and $\omega_{kji}$ is the weight of the $j$-th sigma unit associated with the input element $\x_k$. A pi-sigma network is intrinsically a shallow quadratic network. Along this direction, \cite{milenkovic1996annealing} removed all cubic and higher-order terms and proposed to use the annealing technique to find optimal quadratic terms.

Recently, higher-order units were revisited \cite{zoumpourlis2017non, tsapanos2018neurons, chrysos2021deep, livni2014computational, krotov2018dense}. In \cite{redlapalli2003development, zoumpourlis2017non, jiang2020nonlinear, mantini2021cqnn}, a quadratic convolutional filter $\sigma(\x^\top W \x)$ whose  complexity is $\mathcal{O}(n^2)$ was proposed to replace the linear filter, while in \cite{tsapanos2018neurons} a parabolic neuron: 
$\sigma\Big((\x^\top\mathbf{w}^{r}+b^{r})(\x^\top\mathbf{w}^{g}+b^{g})\Big)$ was proposed for deep learning, which is a special case of our neuron without the power term. 
Bu \textit{et al.} \cite{bu2021quadratic} proposed a quadratic neuron $\sigma\Big((\x^\top\mathbf{w}_1)(\x^\top\mathbf{w}_2)+\x^\top\mathbf{w}_3\Big)$, which is a special case of our quadratic design when excluding the bias terms. Xu \textit{et al.}'s quadratic neuron design \cite{xu2022quadralib} is the same as \cite{bu2021quadratic}. \cite{goyal2020improved} proposed a quadratic neuron: $\sigma\Big((\x\odot\x)^\top\mathbf{w}^{b}+c\Big)$, which is also a special case of our neuron excluding the interaction term. 
In \cite{chrysos2021deep}, the higher-order units as described by Eq. \eqref{HOunits} were embedded into a deep network to reduce the complexity of the individual unit via tensor decomposition and factor sharing. Such a network achieved cutting-edge performance on several tasks. Compared to \cite{chrysos2021deep}, our group proposed a simplified quadratic neuron with $\mathcal{O}(3n)$ parameters and argued that more complicated neurons are not necessary based on the fundamental theorem of algebra \cite{remmert1991fundamental}. Interestingly, when only the first and second-order terms are kept, and the rank is set to two in tensor decomposition, the formulation of the polynomial neuron in \cite{chrysos2021deep} is $\sigma\Big((\x^\top\mathbf{w}^{r}+b^{r})(\x^\top\mathbf{w}^{g}+b^{g})+\x^\top\mathbf{w}^{r}+b^{r}\Big)=\sigma\Big((\x^\top\mathbf{w}^{r}+b^{r})(\x^\top\mathbf{w}^{g}+b^{g}+1)\Big)$. In this regard, the polynomial neuron in \cite{chrysos2021deep} is also a special case of our quadratic neuron by setting $b^{r}=b^{r}+1$, $\mathbf{w}^{b}=0$, and $c=0$.

On the other hand, neurons with polynomial activation \cite{livni2014computational, krotov2018dense} are also relevant. However, the polynomially activated neurons are essentially different from polynomial neurons. In the networks with a polynomial activation, their neurons are still characterized by a piece-wise linear decision boundary, while in the latter case, a polynomial decision boundary is implied that can truly extract nonlinear features. Kileel \textit{et al.} \cite{kileel2019expressive} found that a network with polynomial activation is an algebraic variety, and proposed the dimension of the algebraic variety to measure the representation power of such a network.
 
\textbf{Development of quadratic networks.} In a theoretical perspective, the representation capability of quadratic networks was partially addressed in the analyses on the role of multiplicative operations in a network \cite{jayakumar2019multiplicative}, where the incorporation of multiplicative interactions can strictly enlarge the hypothesis space of a feedforward neural network. Fan \textit{et al.} \cite{fan2020universal} showed that a quadratic network can approximate a complicated radial function with a more compact structure than a conventional model. More results are on applications of quadratic networks. For example, Nguyen \textit{et al.} \cite{nguyen2019deep} applied quadratic networks to predict the compressive strength of foamed concrete. Bu \textit{et al.} \cite{bu2021quadratic} applied a quadratic network to solve forward and inverse problems in partial different equations (PDEs).

\section{EXPRESSIVITY OF QUADRATIC NETWORKS}

Given an input $\x\in\mathbb{R}^n$, a quadratic neuron of our interest is characterized as $\sigma(q(\x))$, which is
\begin{equation}
\begin{aligned}
&\sigma\Big((\sum_{i=1}^{n} w_{i}^r x_i +b^r)(\sum_{i=1}^{n} w_{i}^g x_i +b^g) + \sum_{i=1}^{n} w_{i}^b x_{i}^2+c \Big) \\
=&\sigma\Big((\x^\top\mathbf{w}^{r}+b^{r})(\x^\top\mathbf{w}^{g}+b^{g})+(\x\odot\x)^\top\mathbf{w}^{b}+c\Big),
\end{aligned}
\end{equation}
where $\sigma(\cdot)$ is a nonlinear activation function (hereafter, we use $\sigma(\cdot)$ to denote ReLU), $\odot$ denotes the Hadamard product, $\mathbf{w}^r,\mathbf{w}^g, \mathbf{w}^b\in\mathbb{R}^n$, and $b^r, b^g, c\in\mathbb{R}$ are biases. We use the superscription $r, g, b$ for better distinguishment. For a univariate input, we have
\begin{equation}
q(x)=(w^{r}x +b^r)(w^{g}x +b^g) + w^{b}x^2+c.
\end{equation}

A network with higher expressivity means that this network can either express more functions or express the same function more accurately. In this section, we show the enhanced expressivity of our quadratic network relative to either a conventional network or a conventional network with quadratic activation. For comparison with a conventional network, we note that in spline theory, a polynomial spline has a significantly more accurate approximation power than the linear spline. Since a quadratic network can express a polynomial spline and a conventional network corresponds to a linear spline,  a quadratic network is naturally more powerful than a conventional network. As far as conventional networks with quadratic activation are concerned, we leverage the \textit{dimension of algebraic variety} as the model expressivity measure defined in \cite{kileel2019expressive} to demonstrate that our quadratic network has a higher dimension of algebraic variety, which suggests that a quadratic network is more expressive than a conventional network with quadratic activation.

\subsection{Spline Theory}
Let $f$ be a function in $C^{n+1}[a,b]$ and $p$ be a polynomial to interpolate the function $f$ according to $n+1$ distinct points $x_0, x_1, \cdots,x_{n} \in [a,b]$. Then, for any $x \in [a,b]$, there exists a point $\xi_x \in [a,b]$ such that
\begin{equation}
    f(x) - p(x) = \frac{1}{(n+1)!} f^{(n+1)}(\xi_x) \prod_{i=0}^{n} (x-x_i). 
\end{equation}
For some function such as the Runge function $R(x)=1/(1+16x^2)$, as the degree of the polynomial $p(x)$ increases, the interpolation error goes to infinity, \textit{i.e.},
\begin{equation}
    \underset{n \to \infty }{\lim} \Big( \underset{-1\leq x \leq 1}{\max} |f(x)-p_n(x)| \Big) = \infty.
\end{equation}
This bad behavior is referred to as the Runge phenomenon \cite{boyd1992defeating}, which is explained by two reasons: 1) As $n$ grows, the magnitude of the $n$-th derivative increases; and 2) large distances between the points makes $\displaystyle\prod_{i=0}^{n} (x-x_i)$ huge.  

To overcome the Runge phenomenon when a high-order polynomial is involved for interpolation, the polynomial spline \cite{birman1967piecewise} is invented to partition a function into pieces and fit a low-order polynomial to a small subset of points in each piece. Given a set of instances $\{(x_i, f(x_i))\}_{i=0}^n$ from the function $f(x)$, a polynomial spline is generically formulated as follows:

\begin{equation}
       S(x) = \begin{cases}
        s_0 (x)    , \ \ \ \ \ \ \ \ x_0 \leq x < x_1 \\
        s_1(x)     , \ \ \ \ \ \ \ \ x_1 \leq x < x_2 \\
        \ \ \ \ \ \ \ \ \ \ \ \ \ \ \ \ \ \ \ \ \ \ \  \vdots \\
        s_{n-1}(x)  , \ \ \ \ x_{n-1} \leq x \leq x_{n} 
        \end{cases},
\label{eqn:polynomial_spline}        
\end{equation}
where $S(x)$ is a polynomial spline of the order $2m-1$ (w.l.o.g., we consider the odd-degree polynomial splines), satisfying that (1) $s_i(x_{i+1})=s_{i+1}(x_{i+1})= f(x_{i+1})$ for $i=0,1,\cdots,n$; (2) $s_i^{(2k)}(x_{i+1})=s_{i+1}^{(2k)}(x_{i+1})=f^{(2k)}(x_{i+1}), i=0,1\cdots,n-2, k=1,2,\cdots,m-1$. The simplest piecewise polynomial is a piecewise linear function. However, a piecewise linear interpolation is generically inferior to a piecewise polynomial interpolation in terms of accuracy. To illustrate this rigorously, we have the following lemma:

\begin{lemma}[\cite{hall1976optimal}]
Let $S(x) \in C^{2m-2}$ be the $(2m-1)$-th degree spline of $f$, as described by Eq. \eqref{eqn:polynomial_spline}, if $x_0, x_1, ...,x_{n}$ is a uniform partition with $x_{i+1}-x_i = h$, then
\begin{equation}
    ||f - S||_{\infty} \leq \frac{E_{2m}}{2^{2m}(2m)!}||f^{(2m)}||_\infty h^{2m},
\end{equation}
where $||\cdot||_{\infty}$ is the $l_\infty$ norm and $E_{2m} \sim (-1)^m \sqrt{\frac{m}{\pi}}(\frac{4m}{\pi e})^{2m}$ denotes the $2m$-th Euler number. The approximation error is bounded by $\mathcal{O}(h^{2m})$. For example,
\begin{equation}
    \begin{cases}
            ||f-S||_{\infty} \leq 0.25 \cdot h^{2},  \ \ \ \ \ \ \ \ \ \ \ \ \ \ \ \ m=1 \\
            ||f-S||_{\infty} \leq 0.013 \cdot h^{4},   \ \ \ \ \ \ \ \ \ \ \ \ \ \  m=2 \\
            ||f-S||_{\infty} \leq 0.00000137 \cdot h^{12}, \ \ \ \ \ \ m=6. \\
    \end{cases}
\end{equation}
\end{lemma}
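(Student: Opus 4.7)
The plan is to prove the estimate via a Peano-kernel representation of the interpolation error on each subinterval, and then identify the extremal function (the Euler spline) to pin down the sharp constant featuring $E_{2m}$.

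First, I would fix an arbitrary $x \in [x_i, x_{i+1}]$ and observe that the error functional $L_x[f] := f(x) - S(x)$ is linear in $f$ and, by construction of the odd-degree interpolating spline defined by the interpolation and even-derivative matching conditions in the statement, annihilates every polynomial of degree at most $2m-1$. The Peano kernel theorem then furnishes a kernel $K_m(x,\cdot)$ with
\begin{equation*}
f(x) - S(x) = \int_{x_0}^{x_n} K_m(x,t)\, f^{(2m)}(t)\, dt,
\end{equation*}
so H\"older's inequality gives $|f(x) - S(x)| \leq \|f^{(2m)}\|_\infty \int |K_m(x,t)|\, dt$. Rescaling via $t = x_i + h\tau$ on each subinterval then makes the integral scale as $h^{2m}$, reducing the task to a uniform bound on a fixed reference kernel $K_m^0$ over a unit partition.

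The main obstacle, and the source of the Euler-number constant, is identifying $\sup_\xi \|K_m^0(\xi,\cdot)\|_{L^1}$ sharply. I would exhibit the extremizer as the Euler spline $\mathcal{E}_{2m}$, namely the $2m$-fold antiderivative of the $\pm 1$ square wave with period $2h$. Because $\mathcal{E}_{2m}$ vanishes at every knot together with all even derivatives up through order $2m-2$, its spline interpolant is identically zero, so the error at the midpoint of $[x_i, x_{i+1}]$ equals $\mathcal{E}_{2m}$ evaluated there. This midpoint value is computed from the generating-function identity $\sec t = \sum_{k \geq 0} \tfrac{|E_{2k}|}{(2k)!} t^{2k}$ at $t = \pi/2$, giving exactly $E_{2m}/(2^{2m}(2m)!)\, h^{2m}$. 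Matching this lower bound to an upper bound on $\|K_m^0\|_{L^1}$, obtained by checking that the sign pattern of $K_m^0$ coincides with that of $\mathcal{E}_{2m}$ over a single period, closes the argument. The three numerical cases then follow by substituting the Euler numbers $|E_2|$, $|E_4|$, $|E_{12}|$ into the general constant.
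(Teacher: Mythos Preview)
The paper does not prove this lemma at all: it is stated with a citation to \cite{hall1976optimal} and used as a black box to motivate Proposition~\ref{prop:spline_appro}. There is therefore no ``paper's own proof'' against which to compare your proposal.

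That said, your outline is the classical route and is essentially how the sharp constant is obtained in the source you would be reconstructing: a Peano-kernel representation for the error in terms of $f^{(2m)}$, followed by identification of the extremal function as the perfect Euler spline of degree $2m-1$, whose sup-norm produces the Euler-number constant. Two technical points deserve care if you flesh this out. First, verifying that $L_x$ annihilates $\Pi_{2m-1}$ requires checking that the interpolation scheme in Eq.~\eqref{eqn:polynomial_spline}---which matches not just function values but even-order derivatives $f^{(2k)}$ at the knots---reproduces polynomials of degree $2m-1$ exactly; this is true but is a property of this particular Lidstone-type scheme, not of arbitrary spline interpolation. Second, your remark about evaluating the secant generating function ``at $t=\pi/2$'' cannot be literal, since $\sec(\pi/2)$ diverges; the Euler-number constant arises instead from the midpoint value of the Euler polynomial $E_{2m}(1/2)$, or equivalently from the $L^\infty$ norm of the normalized Euler spline on $[0,1]$. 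With those two points corrected, the argument goes through.
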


This theorem also suggests that, to achieve the same accuracy, high-degree splines require fewer amounts of data than linear splines do. To reveal the expressivity of quadratic networks, we have the following proposition to show that a quadratic network can accurately express any univariate polynomial spline but a conventional network cannot. The method used for the proof is to re-express $S(x)$ into a summation of several continuous functions and use quadratic network modules to express these functions one by one. Finally, we aggregate these modules together, as shown by Figure \ref{Fig_proof_construction}.

\begin{figure}[htb]
\center{\includegraphics[width=\linewidth] {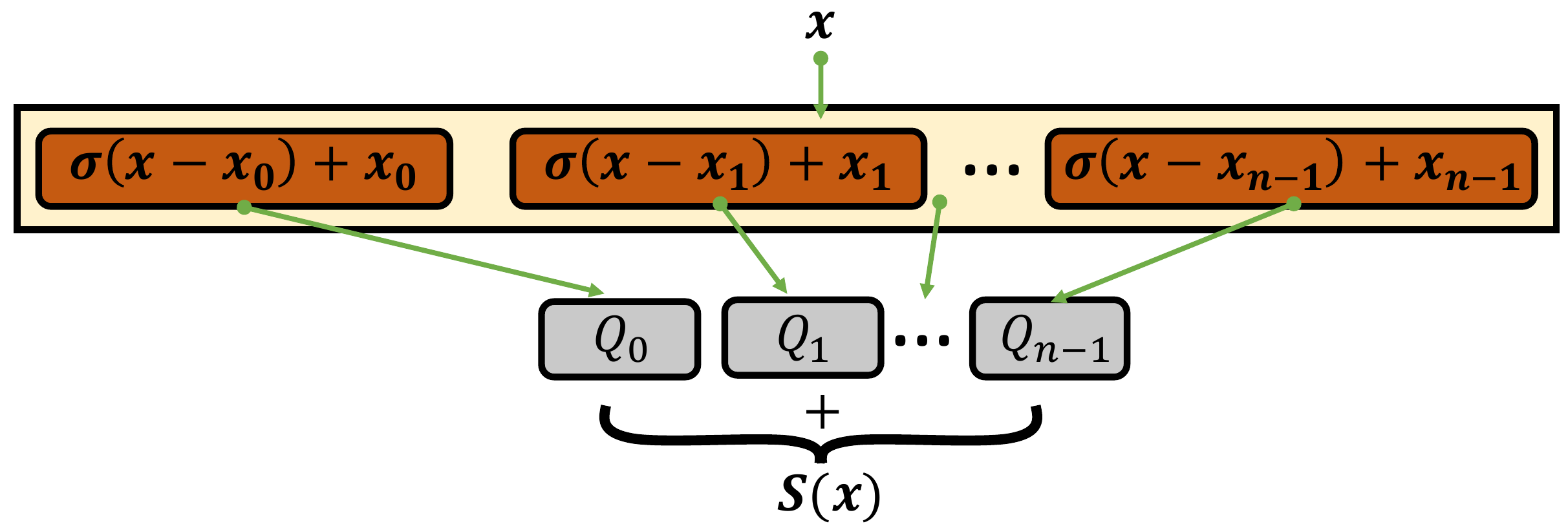}}
\caption{Constructing polynomial spline by a quadratic network.}
\label{Fig_proof_construction}
\end{figure}

\begin{prop}[Universal Spline Approximation]
Suppose that high-order terms of the polynomial spline do not degenerate, given a univariate polynomial spline $S(x)$ expressed in Eq. \eqref{eqn:polynomial_spline},
there exists a quadratic network with ReLU activation $Q(x)$ satisfying $Q(x)=S(x)$, while there exists no conventional networks with ReLU activation $L(x)$ such that $L(x)= S(x)$. 
\label{prop:spline_appro}
\end{prop}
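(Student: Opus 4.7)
The plan is to prove the two halves of the proposition separately: first constructively build a quadratic ReLU network that reproduces $S(x)$, and then rule out any conventional ReLU network by a local degree argument on each piece.

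For the constructive half, I would begin from the telescoping identity
\begin{equation*}
S(x) \;=\; s_0(x) \;+\; \sum_{i=1}^{n-1}\bigl(s_i(x)-s_{i-1}(x)\bigr)\cdot \mathbf{1}_{\{x\ge x_i\}}.
\end{equation*}
Continuity of the spline at each interior knot forces $s_i(x_i)=s_{i-1}(x_i)$, so $(x-x_i)$ divides the difference and I can factor $s_i(x)-s_{i-1}(x)=(x-x_i)\,r_i(x)$ with $r_i$ a polynomial of degree at most $2m-2$. Because $(x-x_i)\,\mathbf{1}_{\{x\ge x_i\}}=\sigma(x-x_i)$, this rewrites as
\begin{equation*}
S(x)\;=\;s_0(x)\;+\;\sum_{i=1}^{n-1} \sigma(x-x_i)\cdot r_i(x),
\end{equation*}
a sum whose every summand is either a polynomial in $x$ or a polynomial in $x$ multiplied by a single ReLU applied to an affine function of $x$. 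I would then supply two building blocks: a quadratic sub-network whose output is any prescribed polynomial in $x$ of degree $\le 2m-1$, and a single quadratic neuron that multiplies that polynomial by $\sigma(x-x_i)$ via its interaction term. Laying $n$ copies of these blocks in parallel and combining them linearly in the output layer (as sketched in Figure~\ref{Fig_proof_construction}) would yield $Q(x)=S(x)$.

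For the converse, any conventional ReLU network $L(x)$ computes a continuous piecewise linear function with finitely many linear regions. Under the non-degeneracy hypothesis, each spline piece $s_i$ is a genuine polynomial of degree $2m-1\ge 2$, and is therefore not affine on any open sub-interval of $[x_i,x_{i+1}]$. If we had $L(x)=S(x)$, then on the intersection of such a sub-interval with one of the linear regions of $L$ we would have an affine function equalling a non-affine polynomial on an open set, which is impossible. Hence no such $L$ exists.

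The main obstacle is the polynomial-producing building block: emitting monomials of all degrees up to $2m-1$ through a stack of quadratic ReLU neurons without the ReLU truncating intermediate values. I would handle this by lifting $x$ at the input into the non-negative pair $\sigma(x),\sigma(-x)$ and working with $x^2=\sigma(x)^2+\sigma(-x)^2$, which is non-negative and passes through every subsequent ReLU unchanged. Since a quadratic neuron can square its input, $\lceil\log_2(2m-1)\rceil$ hidden layers suffice to generate $x^2,x^4,\ldots,x^{2^k}$; intermediate even powers arise from cross products inside quadratic neurons, and odd powers come from one final multiplication by $x=\sigma(x)-\sigma(-x)$. A linear read-out then adjusts the coefficients to match $r_i$ or $s_0$. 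The remaining coefficient bookkeeping across the doubling degrees is tedious but conceptually routine, so the figure-level construction is adequate to convey the proof.
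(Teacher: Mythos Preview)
Your proposal is correct and reaches the same conclusion, but the constructive half is organized differently from the paper's argument, and the difference is worth noting.

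The paper also telescopes, writing $S(x)=\sum_{i=0}^{n-1} R_i(x)$ with $R_i(x)=s_i(x)-s_{i-1}(x)$ for $x\ge x_i$ and $R_i(x)=0$ otherwise (taking $s_{-1}\equiv 0$). Instead of factoring out $(x-x_i)$ as you do, the paper observes that $R_i$ can be written as a \emph{composition}: since $R_i$ is continuous and constant (equal to $0$) on $\{x\le x_i\}$, one has $R_i(x)=Q_i\bigl(\sigma(x-x_i)+x_i\bigr)$ where $Q_i=s_i-s_{i-1}$ is an ordinary polynomial. The clamp $x\mapsto\sigma(x-x_i)+x_i$ freezes the argument at $x_i$ on the left and is the identity on the right. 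The paper then simply cites the known fact that a quadratic ReLU network can realise any univariate polynomial, so each $Q_i$ block is a black box, and the figure is just these blocks fed by the clamped input and summed. Your version replaces the composition by a \emph{product}: you factor $s_i-s_{i-1}=(x-x_i)r_i(x)$ and use $\sigma(x-x_i)\cdot r_i(x)$. Both decompositions are valid.

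What each buys: the paper's clamp-then-compose trick sidesteps one technical nuisance that your route creates. In your construction the ``single quadratic neuron that multiplies $r_i(x)$ by $\sigma(x-x_i)$'' is a hidden neuron and therefore outputs $\sigma\bigl(\sigma(x-x_i)\,r_i(x)\bigr)$, which truncates whenever $r_i(x)<0$ on $\{x>x_i\}$; moreover, a single output quadratic neuron cannot realise the sum $\sum_i \sigma(x-x_i)\,r_i(x)$ because its off-diagonal interaction term is rank one. You already have the fix in hand (the $u=\sigma(u)-\sigma(-u)$ split you invoke for odd monomials), so this is a routine patch rather than a gap, but it should be applied here too. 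Conversely, your write-up is more self-contained: you actually sketch the polynomial-producing block via repeated squaring of $x^2=\sigma(x)^2+\sigma(-x)^2$, whereas the paper defers that to a citation. Your negative half (piecewise-linear versus genuinely higher-degree on an open subinterval) matches the paper's one-line argument.
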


\begin{proof}
Because a function defined by a quadratic network is a continuous function, we need to re-express $S(x)$ into a formulation on continuous functions to allow the construction. Mathematically, we re-write $S(x)$ given in Eq. \eqref{eqn:polynomial_spline} as
\begin{equation}
S(x) = \sum_{i=0}^{n-1} R_{i}(x),  \qquad x \in [x_0, x_n],
\label{eqn:re_polynomial_spline}        
\end{equation}
where for any $i = 0,\cdots, n-1$, 
\begin{equation}
       R_i(x) = \begin{cases}
        0   , \ \ \ \ \ \ \ \ \ \ \ \ \ \ \ \ \ \ \ \ \ \ \ \ \ x < x_i \\
        s_i (x)-s_{i-1}(x)    , \ \ \ \ \ \ \ \  x \geq x_i \\
        \end{cases}.
\label{eqn:re_polynomial}        
\end{equation}
For notation consistency, $s_{-1}(x)= 0$. It is straightforward to verify that Eq. \eqref{eqn:re_polynomial_spline} is equivalent to Eq. \eqref{eqn:polynomial_spline}. For any $x \in [x_k,x_{k+1})$,
\begin{equation}
\begin{aligned}
\sum_{i=0}^{n-1} R_{i}(x)
= &  \sum_{i=0}^{k} R_{i}(x)  =  \sum_{i=0}^{k} \big( s_i (x)-s_{i-1}(x)\big)  \\
= & s_k(x) = S(x).
\end{aligned}
\label{evaluation}
\end{equation}

$R_i(x)$ has the following favorable characteristics: 1) It is a truncated function that has zero function value when $x<x_i$; 2) and due to $s_i (x_i)-s_{i-1}(x_i) = 0$, $R_i(x)$ is also a continuous function. Thus, $R_i(x)$ can be succinctly expressed as
\begin{equation}
       R_i(x) = s_i (\sigma(x-x_i)+x_i)-s_{i-1}(\sigma(x-x_i)+x_i), 
\label{eqn:R_i}        
\end{equation}
where $\sigma(x-x_i)+x_i$ maps $x \in \{x|x\leq x_i\}$ into $x_i$ that has the function value of zero.

Because a quadratic network can represent any univariate polynomial \cite{fan2020universal}, we let ${Q_i}(x)=s_i (x)-s_{i-1}(x)$; then, 
\begin{equation}
       R_i(x) = Q_i(\sigma(x-x_i)+x_i). 
\label{eqn:Q_i}        
\end{equation}
Substituting Eq. \eqref{eqn:Q_i} into Eq.  \eqref{eqn:re_polynomial_spline}, we derive our construction:
\begin{equation}
Q(x)=s_0(x_0)+\sum_{i=0}^{n-1} Q_{i}(\sigma(x-x_i)+x_i)=S(x), 
\label{eqn:quadratic_polynomial_spline}        
\end{equation}
which concludes the proof of the first part of this proposition. For the second part, because a conventional network with ReLU activation is a piecewise linear function, $L(x)$ cannot perfectly represent $S(x)$ when high-order terms are non-zero.

\end{proof}

\textbf{Remark 1.} Although the proof of Proposition \ref{prop:spline_appro} is constructive and demands no complicated techniques, Proposition  \ref{prop:spline_appro} informs us an important message: A polynomial spline is a solution in the hypothesis space of quadratic networks, yet it will not appear in the hypothesis space of conventional networks. This implies that the expressivity of quadratic networks is superior to that of conventional networks, since a piecewise polynomial spline is certainly a better fitting tool than a piecewise linear spline. In a high-dimensional space, the polynomial spline is also superior to the linear spline \cite{wang2013multivariate}. Similarly, we can also use the quadratic network to construct the polynomial spline with no error but cannot use the conventional network to achieve so.

\subsection{Dimension of Algebraic Variety}

To our best knowledge, there exist at least two ways to realize so-called polynomial networks. The first is to utilize a polynomial activation function, while the second one is to take a polynomial function for the aggregation, such as our quadratic neurons. Despite the same name, two realizations are dramatically different. To put the superior expressivity of quadratic networks in perspective, we employ \textit{the dimension of algebraic variety}, which was proposed to gauge the expressive power of polynomial networks \cite{kileel2019expressive}, to compare the two realizations. We find that the dimension of algebraic variety of our quadratic network is significantly higher than that of its competitor, suggesting that our quadratic network can represent a richer class of functions than the network using quadratic activation.

\textbf{Two realizations.} Assume that a network architecture consists of $H$ layers with their widths specified by a vector $\bm{d} = (d_0, d_1, d_2, \cdots, d_H)$ and $\x \in \mathbb{R}^{d_0}$. A network with a quadratic activation is a function of the form
\begin{equation}
    p_1(\x)= l_H \circ \sigma_1 \circ l_{H-1} \circ \sigma_1 \circ l_{H-2} \cdots \circ \sigma_1 \circ l_1 (\x),
\label{eqn:quadratic_1}    
\end{equation}
where $p_1(\x)\in \mathbb{R}^{d_H}$, $l^h (\x) = \bm{W}^h \x + \bm{b}^h$, and $\sigma_1(z) = z^2$. In contrast, our quadratic network is of the following form: 
\begin{equation}
    p_2(\x)= q_H \circ \sigma_2 \circ q_{H-1} \circ \sigma_2 \circ q_{H-2} \cdots \circ \sigma_2 \circ q_1 (\x),
\label{eqn:quadratic_2}
\end{equation}
where $p_2(\x)\in \mathbb{R}^{d_H}$, $q_h (\x) = (\bm{W}^{h,r} \x + \bm{b}^{h,r})\odot(\bm{W}^{h,g} \x + \bm{b}_{h,g})+\bm{W}^{h,b} (\x\odot\x)+\bm{c}^h$, and $\sigma_2(z) = z$ (To simplify our analysis, we use linear activation for our quadratic network as $\sigma_2(x) = \text{ReLU}(x) - \text{ReLU}(-x)$). Given an architecture $\bm{d}$, the polynomial network with respect to the weights and biases defines a functional space, and we denote the functional spaces of two realizations as $\mathcal{F}_{\bm{d},1}$ and $\mathcal{F}_{\bm{d},2}$, respectively.

\textbf{Dimension of algebraic variety.} In \cite{kileel2019expressive}, the Zariski closure $\mathcal{V}_{\bm{d}}=\overline{\mathcal{F}_{\bm{d}}}$ of $\mathcal{F}_{\bm{d}}$ is considered, where $\mathcal{V}_{\bm{d}}$ is an algebraic variety, and the dimension of $\mathcal{V}_{\bm{d}}$ ($\dim \mathcal{V}_{\bm{d}}$) is a measure to the expressivity of the pertaining network. Although $\mathcal{V}_{\bm{d}}$ is larger than $\mathcal{F}_{\bm{d}}$, their dimensions agree with each other. Moreover, $\mathcal{V}_{\bm{d}}$ is amendable to the powerful analysis tools from algebraic geometry. In the following, based on the results in \cite{kileel2019expressive}, we provide an estimation for the upper bound of $\dim \mathcal{V}_{\bm{d},2}$.

\begin{lemma}
Given an architecture $\bm{d}= (d_0, d_1, d_2, \cdots, d_H)$, the following holds:
\begin{equation}
\begin{small}
\begin{aligned}
& \dim \mathcal{V}_{\bm{d},2} \leq \\
& \min \Big( \sum_{h=1}^H (3d_{h-1}d_h+3d_{h-1})-\sum_{h=1}^{H-1} d_h, d_H{{d_0+2^{H-1}-1}\choose{2^{H-1}}} \Big).
\end{aligned}
\label{eqn:bound}
\end{small}
\end{equation}
\end{lemma}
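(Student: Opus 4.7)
The plan is to establish the two components of the minimum in \eqref{eqn:bound} independently.

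\textbf{Parameter-counting bound.} The assignment $\phi$ sending the tuple of weights $\{\bm{W}^{h,r}, \bm{W}^{h,g}, \bm{W}^{h,b}\}_{h=1}^H$ and biases $\{\bm{b}^{h,r}, \bm{b}^{h,g}, \bm{c}^h\}_{h=1}^H$ to the output $p_2$ is a polynomial map whose image is $\mathcal{F}_{\bm{d},2}$. By standard dimension theory, $\dim\mathcal{V}_{\bm{d},2}=\dim\overline{\mathrm{image}(\phi)}$ is at most the dimension of the parameter domain minus the dimension of a generic fiber. The parameter count totals $\sum_{h=1}^H (3d_{h-1}d_h+3d_{h-1})$ as recorded in the lemma. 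To extract the $-\sum_{h=1}^{H-1} d_h$ correction, I would exhibit an explicit one-parameter symmetry at each hidden neuron $(h,i)$ with $h<H$: rescale $(\bm{W}^{h,r}_i, b^{h,r}_i) \mapsto \lambda\,(\bm{W}^{h,r}_i, b^{h,r}_i)$ and $(\bm{W}^{h,g}_i, b^{h,g}_i) \mapsto \lambda^{-1}\,(\bm{W}^{h,g}_i, b^{h,g}_i)$ simultaneously. This preserves the cross term $(\bm{W}^{h,r}_i\x+b^{h,r}_i)(\bm{W}^{h,g}_i\x+b^{h,g}_i)$, leaves $\bm{W}^{h,b}_i$ and $c^h_i$ untouched, hence preserves both that neuron's output and the entire network function $p_2$. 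Collected over all hidden neurons, these actions form a $\sum_{h=1}^{H-1} d_h$-dimensional continuous family of symmetries acting freely on a Zariski-open dense subset of parameter space, which bounds the generic fiber dimension from below.

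\textbf{Ambient-space bound.} By induction on depth one verifies that each coordinate of $p_2(\x)$ is a polynomial in $\x_1,\ldots,\x_{d_0}$, so $\mathcal{V}_{\bm{d},2}$ sits inside a finite-dimensional real vector space of $\mathbb{R}^{d_H}$-valued polynomials. Following the degree bookkeeping used in \cite{kileel2019expressive} for polynomial activations of degree two, the relevant ambient space has dimension at most $d_H\binom{d_0+2^{H-1}-1}{2^{H-1}}$, yielding the second term in the min. Taking the smaller of the two bounds then produces the claimed inequality.

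\textbf{Main obstacle.} The delicate step is the symmetry bookkeeping. One must verify that the per-neuron $(\lambda,\lambda^{-1})$ action is generically free, and that the symmetries at different hidden layers together give a fiber of dimension exactly $\sum_{h=1}^{H-1} d_h$ rather than something smaller once the actions at adjacent layers are combined. Because the quadratic term $\bm{W}^{h,b}(\x\odot\x)$ and the bias $\bm{c}^h$ are insensitive to this rescaling, the action does not interfere with those parameters, but one still has to rule out accidental degenerations on special subvarieties coming from the mixed structure of $q_h$. A secondary subtlety is that any additional symmetries at the output layer $h=H$, or arising from the interaction with $\bm{W}^{h,b}$, would only tighten the bound further; the argument therefore need not track them, and the stated inequality remains a valid (possibly non-tight) upper estimate.
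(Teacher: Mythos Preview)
Your overall architecture matches the paper's: bound (i) comes from $\dim\mathcal{V}_{\bm{d},2}\leq(\text{parameter count})-(\text{generic fiber dimension})$ and bound (ii) from the dimension of the ambient polynomial space, exactly as the paper argues following \cite{kileel2019expressive}. The point of divergence is the symmetry you use to lower-bound the fiber. The paper transports the between-layer rescaling of \cite{kileel2019expressive} verbatim: left-multiply every parameter block $\bm{\Theta}_h$ by $P_hD_h$ and right-multiply by $D_{h-1}^{-1}P_{h-1}^\top$ for diagonal $D_h$ and permutation $P_h$, then counts the diagonal degrees of freedom as $\sum_{h=1}^{H-1}d_h$. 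You instead exploit a within-neuron symmetry specific to the factored form of $q_h$, sending the $r$-factor to $\lambda\cdot(\text{$r$-factor})$ and the $g$-factor to $\lambda^{-1}\cdot(\text{$g$-factor})$ at each hidden unit. Your choice is in fact the cleaner one for this neuron: it is manifestly output-preserving since $(\lambda a)(\lambda^{-1}b)=ab$ while $\bm{W}^{h,b}$ and $\bm{c}^h$ are untouched, whereas the paper's between-layer scaling, read literally, does not rescale $q_h$ homogeneously (the cross term picks up $D_h^2$ but the $\bm{W}^{h,b}$ and $\bm{c}^h$ terms only $D_h$). Both routes deliver the same $\sum_{h=1}^{H-1}d_h$ continuous symmetries and hence the same inequality. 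Your observation that the $(\lambda,\lambda^{-1})$ trick also applies at layer $H$ and would only sharpen the bound is correct and harmless for the lemma as stated.
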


\begin{proof}
For all diagonal matrices $D_i \in \mathbb{R}^{d_i \times d_i}$ and permutation matrices $P_i \in \mathbb{Z}^{d_i \times d_i}$, the function described in \eqref{eqn:quadratic_2} returns the same output under the following replacements:
\begin{equation}
    \begin{cases}
    & \bm{\Theta}_{1} \leftarrow P_1 D_1 \bm{\Theta}_{1}, \\
    & \bm{\Theta}_{2} \leftarrow P_2 D_2 \bm{\Theta}_{2}D_1^{-1}P_1^T, \\
    & \bm{\Theta}_{3} \leftarrow P_3 D_3 \bm{\Theta}_{3}D_2^{-1}P_2^T, \\
    & \ \ \ \ \ \ \ \ \ \vdots \\
    & \bm{\Theta}_{H} \leftarrow  \bm{\Theta}_{H}D_{H-1}^{-1}P_{H-1}^T,
    \end{cases}
\end{equation}
where $\bm{\Theta}_{h}$ represents any element in $\{\bm{W}^{h,r}, \bm{W}^{h,g}, \bm{W}^{h,b}, \bm{b}^{h,r}, \bm{b}^{h,g}, \bm{c}^{h}\}$. 
As a result, the dimension of a generic fiber of $p_2(\x)$ is at least $\sum_{i=1}^{H-1} d_i$.

According to \cite{eisenbud2013commutative}, the dimension of $\mathcal{V}_{\bm{d},2}$, $\dim \mathcal{V}_{\bm{d},2}$, is equal to the dimension of the domain of $p_2(\x)$ minus the dimension of the generic fiber of $p_2(\x)$, which means
\begin{equation}
 \dim \mathcal{V}_{\bm{d},2} \leq \sum_{h=1}^H (3d_{h-1}d_h+3d_{h-1})-\sum_{h=1}^{H-1} d_h   
\end{equation}
In addition, $\dim \mathcal{V}_{\bm{d},2}$ is at most the number of terms of $p_2(\x)$, which means
\begin{equation}
    \dim \mathcal{V}_{\bm{d},2} \leq d_H{{d_0+2^{H-1}-1}\choose{2^{H-1}}}.
\end{equation}
Combining the above two formulas, we conclude this proof.
\end{proof}

For the same architecture, the upper bound provided by the network with quadratic activation \cite{kileel2019expressive} is
\begin{equation}
 \min \Big( \sum_{h=1}^H (d_{h-1}d_h+d_{h-1})-\sum_{h=1}^{H-1} d_h, d_H{{d_0+2^{H-1}-1}\choose{2^{H-1}}} \Big).
\end{equation}
This bound is lower than what we derived in Eq. \eqref{eqn:bound}. In addition to the upper bound comparison, we have the following proposition to directly compare $\dim \mathcal{V}_{\bm{d},2}$ and $\dim \mathcal{V}_{\bm{d},1}$. 

\begin{prop}
Given the same architecture $\bm{d}= (d_0, d_1, d_2, \cdots, d_H)$, we have
\begin{equation}
\dim \mathcal{V}_{\bm{d},2} > \dim \mathcal{V}_{\bm{d},1}.
\end{equation}
\end{prop}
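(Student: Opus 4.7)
The plan is to first show the containment $\mathcal{F}_{\bm{d},1} \subseteq \mathcal{F}_{\bm{d},2}$ by simulating the quadratic-activation network with the quadratic network layer by layer, then upgrade this to strict containment $\mathcal{V}_{\bm{d},1} \subsetneq \mathcal{V}_{\bm{d},2}$ via a degree count, and finally invoke irreducibility of $\mathcal{V}_{\bm{d},2}$ to conclude the strict dimensional inequality.

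For the containment, I would exhibit parameters of $p_2$ that reproduce $p_1$. Setting $\bm{W}^{1,r} = \bm{W}^{1}$, $\bm{b}^{1,r} = \bm{b}^{1}$, $\bm{W}^{1,g} = 0$, $\bm{b}^{1,g} = \bm{1}$, $\bm{W}^{1,b} = 0$, $\bm{c}^{1} = 0$ makes $q_1(\x)$ collapse to the affine map $l_1(\x)$. For $h \geq 2$, setting $\bm{W}^{h,r} = \bm{W}^{h,g} = 0$, $\bm{b}^{h,r} = \bm{b}^{h,g} = 0$, $\bm{W}^{h,b} = \bm{W}^{h}$, and $\bm{c}^{h} = \bm{b}^{h}$ makes $q_h(\y) = \bm{W}^{h}(\y \odot \y) + \bm{b}^{h} = l_h(\sigma_1(\y))$. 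Composing these choices reproduces $p_1$ exactly, so $\mathcal{F}_{\bm{d},1} \subseteq \mathcal{F}_{\bm{d},2}$ and, taking Zariski closures, $\mathcal{V}_{\bm{d},1} \subseteq \mathcal{V}_{\bm{d},2}$.

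To sharpen this to strict inclusion, I would argue by degree. Every coordinate of $p_1(\x)$ is a polynomial of total degree at most $2^{H-1}$, since affine layers preserve degree while the $H{-}1$ squarings each double it. In contrast, every quadratic layer $q_h$ doubles the degree of its input, so after $H$ such layers the coordinates of $p_2(\x)$ can reach degree $2^H$. Choosing generic $\bm{W}^{h,r}, \bm{W}^{h,g}$ and zeroing the remaining parameters, I would verify inductively that $q_h \circ \cdots \circ q_1$ is a nonzero homogeneous polynomial map of degree $2^h$: the leading form of $q_h(\y)$ equals $(\bm{W}^{h,r}\y) \odot (\bm{W}^{h,g}\y)$, each component of which is a nonzero quadratic form for generic weights, so the induction step goes through. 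Since the set of polynomial maps of degree at most $2^{H-1}$ is a Zariski-closed linear subspace that contains $\mathcal{V}_{\bm{d},1}$ but misses the exhibited degree-$2^H$ map, we conclude $\mathcal{V}_{\bm{d},1} \subsetneq \mathcal{V}_{\bm{d},2}$.

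The conclusion then follows from irreducibility: the parameter space of $p_2$ is an affine space, hence irreducible, and $p_2$ is a polynomial map, so the Zariski closure $\mathcal{V}_{\bm{d},2}$ of its image is irreducible. Any proper Zariski-closed subvariety of an irreducible variety has strictly smaller dimension, yielding $\dim \mathcal{V}_{\bm{d},1} < \dim \mathcal{V}_{\bm{d},2}$. The hard part will be technical rather than conceptual: justifying the top-degree nonvanishing genericity argument cleanly inside the algebraic-geometric framework of \cite{kileel2019expressive}, in particular making sure that the ``degree at most $2^{H-1}$'' constraint really cuts out a proper closed subvariety when the closure is taken in the ambient affine space of polynomial maps of degree at most $2^H$, possibly by passing to the complexification of the real parameter space used in the paper.
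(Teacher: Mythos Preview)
Your proposal is correct and follows the same two-step strategy as the paper---establish $\mathcal{F}_{\bm{d},1}\subseteq\mathcal{F}_{\bm{d},2}$ by a parameter substitution, then argue strictness via degree---but your execution is tighter on both steps and adds a genuinely missing ingredient. For the containment, the paper sets $\bm{W}^{h,r}=\bm{W}^{h,g}=\bm{W}^h$ and $\bm{b}^{h,r}=\bm{b}^{h,g}=\bm{b}^h$ uniformly in $h$, which actually produces $(p_1)^{\odot 2}$ rather than $p_1$ because the final layer still squares; your choice of making $q_1$ affine and routing the squaring through $\bm{W}^{h,b}$ for $h\ge 2$ sidesteps this off-by-one. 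For strictness, the paper's remark that ``$p_1$ can only get degree $2^k$ polynomials while $p_2$ can be arbitrary degree'' is vague and does not obviously survive taking Zariski closures; your argument that $\mathcal{F}_{\bm{d},1}$ lies in the closed linear subspace of maps of degree $\le 2^{H-1}$ while $\mathcal{F}_{\bm{d},2}$ contains a generic degree-$2^H$ map gives a clean proper containment $\mathcal{V}_{\bm{d},1}\subsetneq\mathcal{V}_{\bm{d},2}$. Finally, the paper passes directly from strict containment of varieties to strict inequality of dimensions, which is false in general; your invocation of irreducibility of $\mathcal{V}_{\bm{d},2}$ (as the closure of the image of an affine space under a polynomial map) is exactly the step needed to make that inference valid.
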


\begin{proof}
It can be shown that by the following substitutions:
\begin{equation}
    \begin{cases}
    & \bm{W}^{h,r}= \bm{W}^{h,g} \leftarrow \bm{W}^{h}, \bm{W}^{h,b} \leftarrow \bm{0} \\
    & \bm{b}^{h,r} = \bm{b}^{h,g} \leftarrow \bm{b}^{h}, \bm{c} \leftarrow \bm{0},
    \end{cases}
\end{equation}
$p_2(\x)$ turns into $p_1(\x)$, which means $\mathcal{F}_{\bm{d},1} \subset \mathcal{F}_{\bm{d},2}$.

However, getting $p_2(\x)$ from $p_1(\x)$ is difficult because we need to construct interaction terms $\x_k \x_l$ from $p_1(\x)$. Generally, representing a single quadratic neuron with neurons using quadratic activation requires a good number of neurons:
\begin{equation}
\begin{aligned}
     &  (\sum_{i=1}^{n} w_{i}^r x_i +b^r)(\sum_{i=1}^{n} w_{i}^g x_i +b^g) + \sum_{i=1}^{n} w_{i}^b x_{i}^2+c  \\
   = &  \sum_{k\neq l} (w_{k}^r w_{l}^g+w_{l}^r w_{k}^g) x_k x_l +  \sum_k (w_{k}^r b^g+w_k^g b^r) x_k \\
    & + \sum_{k} (w_{k}^r w_{k}^g+w_k^b) x_k^2 + c \\
   = &  \sum_{k\neq l} (A_{k}x_k+B_{l}x_l)^2 + \sum_k (C_k x_k - D_k)^2 + \sum_k E_k x_k^2,    
\end{aligned}
\end{equation}
where $A_{k}, B_{l}, C_{k}, D_{k}, E_{k}$ are coefficients.

From another meaningful angle, $p_1(x)$ can only get degree $2^k$ polynomials, while $p_2(x)$ can be an arbitrary degree polynomial because there is a product operation in the quadratic neuron. Therefore, $p_1(x)$ can never represent $p_2(x)$. As a result, given the same network structure, $\mathcal{V}_{\bm{d},1} \subset \mathcal{V}_{\bm{d},2}$, thereby we can conclude that $\dim \mathcal{V}_{\bm{d},2} > \dim \mathcal{V}_{\bm{d},1}$.

\end{proof}

\textbf{Remark 2.} The functions defined by the polynomial networks are functional varieties, and their Zariski closures are algebraic varieties. Using the dimension of the algebraic variety to measure its capacity, yet simple and straightforward, we show that our quadratic network has higher expressivity than the conventional network with quadratic activation. The picture is even clearer when the network architecture is shallow and with infinite width. According to  \cite{siegel2020approximation}, a network with quadratic activation is never dense in the set of continuous functions. However, a quadratic network using ReLU activation equipped with such an architecture is preferably dense.

\section{TRAINABILITY OF QUADRATIC NETWORKS}

\begin{figure}[htb]
\center{\includegraphics[width=\linewidth] {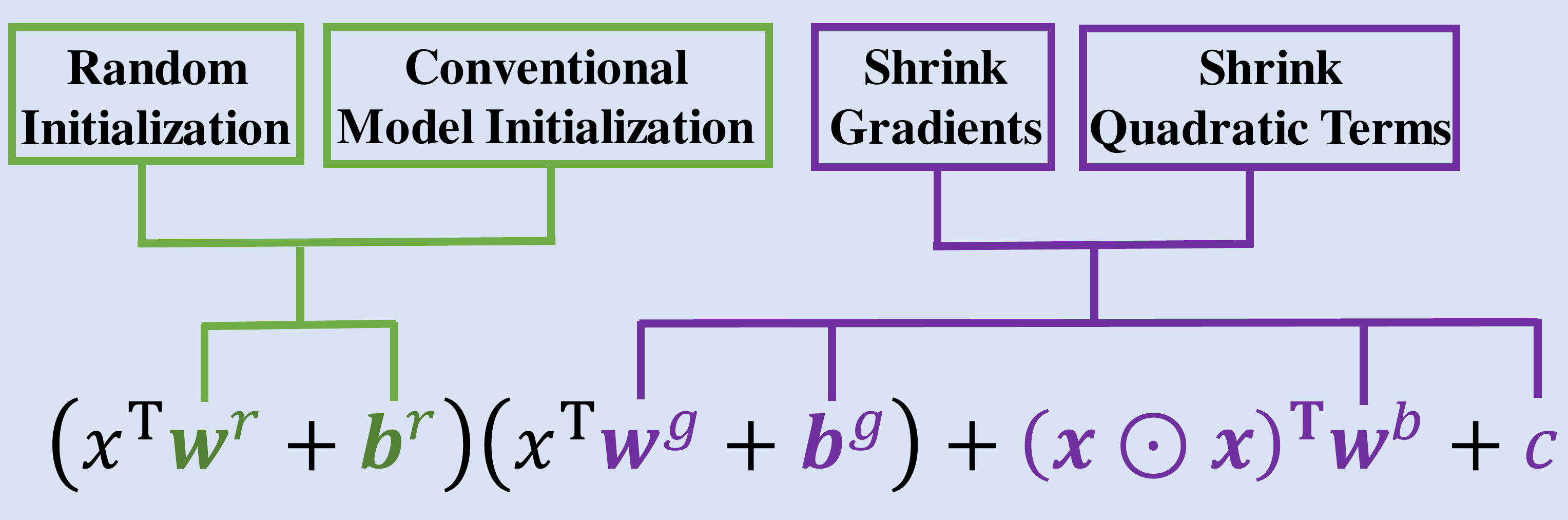}}
\caption{Illustration of the proposed training strategy.}
\label{Fig_training_strategy}
\vspace{-0.3cm}
\end{figure}

\begin{table*}
\centering

\caption{Proposed training strategy. ReLinear$^{sg}$ uses shrinking gradients, while ReLinear$^{sw}$ works with shrinking weights.}
  \begin{tabular}{c|cc|c|c}
    \hline
     &
      \multicolumn{2}{c|}{Initialization} &
      \multicolumn{1}{c|}{Learning Rate} & Updating Equation \\
      
    & $\textbf{w}^g, b^g$ &  $\textbf{w}^b, c$  & $\gamma_g$, $\gamma_b$ \\
    \hline
    ReLinear$^{sg}$ &   $\bm{0}, 1$ & $\bm{0}, 0$  & \makecell{$\gamma_g < \gamma_r$\\  $\gamma_b < \gamma_r$} & \makecell{$\textbf{w}^r = \textbf{w}^r - \gamma_r\frac{\partial{\mathcal{L}}}{\partial{{\textbf{w}^r}}}$ \\
    $\textbf{w}^g = \textbf{w}^g - \gamma_g\frac{\partial{\mathcal{L}}}{\partial{{\textbf{w}^g}}}$ \\ 
    $\textbf{w}^b = \textbf{w}^b - \gamma_b\frac{\partial{\mathcal{L}}}{\partial{{\textbf{w}^b}}}$}\\
    \hline
    ReLinear$^{sw}$-$l_1$  & $\bm{0},1$ & $\bm{0}, 0$  & $\gamma_g = \gamma_b = \gamma_r$ & \makecell{$\textbf{w}^r = \textbf{w}^r - \gamma_r\frac{\partial{\mathcal{L}}}{\partial{{\textbf{w}^r}}}$ \\ 
    $\textbf{w}^g = \textbf{w}^g - \alpha_g\cdot sgn(\textbf{w}^g) - \gamma_r\frac{\partial{\mathcal{L}}}{\partial{{\textbf{w}^g}}}$  \\
    $\textbf{w}^b = \textbf{w}^b - \alpha_b\cdot sgn(\textbf{w}^b) - \gamma_r\frac{\partial{\mathcal{L}}}{\partial{{\textbf{w}^b}}}$}  \\
    \hline
    ReLinear$^{sw}$-$l_2$ & $\bm{0},1$ & $\bm{0}, 0$  & $\gamma_g = \gamma_b = \gamma_r$ & \makecell{$\textbf{w}^r = \textbf{w}^r - \gamma_r\frac{\partial{\mathcal{L}}}{\partial{{\textbf{w}^r}}}$ \\ 
    $\textbf{w}_g = \textbf{w}^g (1-\beta_g) - \gamma_r\frac{\partial{\mathcal{L}}}{\partial{{\textbf{w}^g}}}$  \\
    $\textbf{w}^b = \textbf{w}^b (1-\beta_b) - \gamma_r\frac{\partial{\mathcal{L}}}{\partial{{\textbf{w}^b}}}$}  \\
    \hline
  \end{tabular}
     \vspace{1ex}
 
  {Updating $b^r, b^g, c$ can be similarly done in reference to the  equations for updating $\textbf{w}^r, \textbf{w}^g$ and $\textbf{w}^b$ respectively. \par}

  \label{training_strategy}
  \vspace{-0.5cm}
\end{table*}

Despite the superior expressivity, the training of a quadratic network may collapse, which prevents a quadratic network from achieving its full potential. When randomly initializing parameters of a quadratic network, the training is typically unstable: sometimes the model yields an exploding magnitude of the output; and in some other cases the training curve oscillates. Likely, this is because a quadratic term is nonlinear, and the composition of quadratic operations layer by layer produces a function of an exponentially high degree, causing the instability of the training process. As such, although a quadratic operation is more powerful and promises superior performance, it is necessary to balance model scalability and training stability.  

\subsection{The Training Strategy: ReLinear} 

To control the quadratic terms, we propose the \textit{ReLinear} (referenced linear initialization), which encourages the model to learn suitable quadratic terms gradually and adaptively in reference to the corresponding linear terms. The ReLinear method has the following two steps. First, as shown in Figure \ref{Fig_example_oscillation}, the quadratic neurons are sensitive to a slight change in initialization. Therefore,
the quadratic weights in each neuron are set to $\textbf{w}^g = 0, b^g = 1$ and $\textbf{w}^b = 0, c = 0$. A good initialization is important for the network \cite{kumar2017weight}. Such an initialization degenerates a quadratic neuron into a conventional neuron. Second, quadratic terms are regularized in the training process. Intuitively, two ways of regularization: shrinking the gradients of quadratic weights (ReLinear$^{sg}$); and shrinking quadratic weights (ReLinear$^{sw}$). Let $\gamma_r$, $\gamma_g$, and $\gamma_b$ be the learning rates for updating $\textbf{w}^r, b^r$, $\textbf{w}^g, b^g$ and $\textbf{w}^b, c$, respectively,  $\alpha_g, \alpha_b$ be the weight factors of $\textbf{w}_g$ and $\textbf{w}_b$, and $\beta_g, \beta_b$ be the weight decay rates for $\textbf{w}_g$ and $\textbf{w}_b$, respectively. In Table \ref{training_strategy}, we summarize the key points of the proposed training strategy. 

Specifically, for ReLinear$^{sg}$ in Table \ref{training_strategy}, we set different learning rates for $\textbf{w}^r, b^r$ and $\textbf{w}^g, b^g, \textbf{w}^b, c$, where the learning rate for the former keeps intact as the conventional network, while the learning rate for the latter adjusts quadratic nonlinearity. By setting the learning rate of $\textbf{w}^g, b^g, \textbf{w}^b, c$ to a smaller value, we can prevent magnitude explosion while utilizing the quadratic nonlinearity. For ReLinear$^{sw}$ in Table \ref{training_strategy}, a straightforward way is to use $l_1$ or $l_2$ norms for $\textbf{w}^g, b^g, \textbf{w}^b, c$, and shrink the quadratic weights at each iteration directly.

Shrinking the gradients of quadratic terms (ReLinear$^{sg}$) is better than shrinking quadratic terms (ReLinear$^{sw}$). Shrinking quadratic terms is to adjust quadratic terms along the directions of those parameters, which may not decrease the loss due to their deviation from the directions of the gradients. In contrast, shrinking the gradients respects the directions of the gradients, which works better to minimize the loss. 

Furthermore, regarding the parameters $\textbf{w}^r, b^r$, we can use either random initialization or weight transfer to train a conventional network sharing the same structure of a quadratic network and then transfer the learned parameters into the quadratic network. Thus, in weight transfer, $\textbf{w}^r, b^r$ in each quadratic neuron are initialized by the parameters of the corresponding conventional neuron. In contrast to the random initialization, the weight transfer has an extra computational cost to train the conventional model. If the conventional model of the same structure needs to be trained, we estimate that the total cost will increase by around $20\%$ because the number of multiplications of a conventional neuron is $20\%$ of that of a quadratic neuron. 

\textbf{Remark 3.} The pre-trained models are widely used in many computationally intensive or data-insufficient scenarios \cite{chen2021pre}. For example, in transfer learning, the representation and knowledge from a pre-trained model for a task can facilitate a model for another task. To train a quadratic network, we may also use a pre-trained conventional model for weight transfer. However, doing so is not a practice of transfer learning, as the pre-trained model is from the same task.

In our numerical experiments, consistent with Figure \ref{fig:guaranteed}, the quadratic network trained via ReLinear always outperforms the conventional network of the same structure. Because when $\gamma_{g}=\gamma_{b}=0$ or $\alpha_{g}=\alpha_{b}=0$ or $\beta_{g}=\beta_{b}=0$, the quadratic network will be a conventional network, therefore at least delivering the same performance as the conventional network. As $(\gamma_{g}, \gamma_{b})$ or $(\alpha_{g}, \alpha_{b})$, or $(\beta_{g},\beta_{b})$ gradually increases, the quadratic terms are preferably evolving to extract features and refine the workflow, making the model generally better than the corresponding conventional model. Of course, $(\gamma_{g}, \gamma_{b})$ or $(\alpha_{g}, \alpha_{b})$, or $(\beta_{g},\beta_{b})$ should not be too large or small; otherwise, the quadratic terms would be insignificant or too aggressive. Moreover, tuning these hyperparameters is easy and will not undermine user-friendliness.

\subsection{Mechanism Analysis} 
In this part, we theoretically shed light on why the proposed ReLinear can avoid the magnitude explosion during the training of quadratic networks. We also analyze the convergence behavior of the proposed ReLinear and its corresponding convergence rate. 

\textbf{Stabilizing the training.} For conciseness and convenience, notations in Eqs. \eqref{eqn:quadratic_1} and \eqref{eqn:quadratic_2} are inherited. We formulate a fully-connected conventional network as 
\begin{equation}
    f_1(\x)= l_H \circ \sigma_1 \circ l_{H-1} \circ \sigma_1 \circ l_{H-2} \cdots \circ \sigma_1 \circ l_1 (\x),
\end{equation}
where $l^h (\x) = \bm{W}^h \x + \bm{b}^h$, $\bm{W}^h \in \mathbb{R}^{d_h \times d_{h-1}}, \bm{b}^h \in \mathbb{R}^{d_h}$, and $\sigma_1(z) = \max\{z,0\}$.
A fully-connected quadratic network is formulated as 
\begin{equation}
    f_2(\x)= q_H \circ \sigma_2 \circ q_{H-1} \circ \sigma_2 \circ q_{H-2} \cdots \circ \sigma_2 \circ q_1 (\x),
\end{equation}
where $q_h (\x) = (\bm{W}^{h,r} \x + \bm{b}^{h,r})\odot(\bm{W}^{h,g} \x + \bm{b}^{h,g})+\bm{W}^{h,b} (\x\odot\x)+\bm{c}^h$, $\bm{W}^{h,r}, \bm{W}^{h,g}, \bm{W}^{h,b} \in \mathbb{R}^{d_h \times d_{h-1}}, \bm{b}^{h,r}, \bm{b}^{h,g}, \bm{c}^h \in \mathbb{R}^{d_h}, h=1,2,\cdots,H$, $d_H=1$, and $\sigma_2(z) = \sigma_1(z)$.

In our experiments, we find that the training loss of a normally trained quadratic network can be very large in the early stage because the output of the network is large. We think this is due to the nonlinear amplification of quadratic terms to the input. The reason why the ReLinear can stabilize the training of quadratic networks is its ability to suppress the quadratic terms properly. To dissect the hidden mechanism, we first conduct Taylor expansion around $0$ for conventional and quadratic networks by keeping the linear term (A ReLU network is not differentiable, but w.l.o.g., we can assume the ReLU is approximated by a smooth function):
\begin{equation}
 f(\Delta \x)  \approx f(0)  + \frac{\partial f}{\partial \x}\Big|_{\x = 0} \Delta \x.
\end{equation}

Naturally, $\Vert \frac{\partial f}{\partial \x}\Vert_{\x = 0}$ is used to measure the amplification effect. The derivatives of $ f_1(\x)$ and $f_2(\x)$ are computed as follows:
\begin{equation}
\begin{aligned}
  &  \Vert \frac{\partial f_1}{\partial \x}\Vert_{\x = 0}\\
  = & \Vert \bm{W}^H \odot \sigma'(\u^{H-1})^\top \bm{W}^{H-1} \odot \sigma'(\u^{H-2})^\top\cdots \bm{W}^1 \Vert   \\
  \leq  & \Vert \bm{W}^{H} \Vert \Vert \bm{W}^{H-1} \Vert \cdots \Vert \bm{W}^1 \Vert
\end{aligned}
\label{eqn:linear_derivative_reduced}
\end{equation}
and 
\begin{equation}
\begin{aligned}
         &  \Vert \frac{\partial f_2}{\partial \x}\Vert_{\x = 0}  \\ =  & \Vert (\bm{W}^{H,r}\odot (\bm{W}^{H,g} \u^{H-1} + \bm{b}^{H,g})+ \bm{W}^{H,g}\odot  (\bm{W}^{H,r} \u^{H-1}  \\
        & + \bm{b}^{H,r})  + 2\bm{W}^{H,b}\odot \u^{H-1})  
    \odot  \sigma'(\u^{H-1})^\top \cdots  (\bm{W}^{1,r}\odot \\ &  (\bm{W}^{1,g} \x  +  \bm{b}^{1,g})+ \bm{W}^{1,g}\odot (\bm{W}^{1,r} \x + \bm{b}^{1,r})+2\bm{W}^{1,b}\odot\x) \Vert \\
    \leq & \Vert (\bm{W}^{H,r}\odot (\bm{W}^{H,g} \u^{H-1} + \bm{b}^{H,g})+ \bm{W}^{H,g}\odot  (\bm{W}^{H,r} \u^{H-1}  \\
        & + \bm{b}^{H,r})  + 2\bm{W}^{H,b}\odot \u^{H-1})  \Vert 
     \cdots \Vert (\bm{W}^{1,r}\odot  (\bm{W}^{1,g} \x  +  \bm{b}^{1,g}) \\ & + \bm{W}^{1,g}\odot (\bm{W}^{1,r} \x + \bm{b}^{1,r})+2\bm{W}^{1,b}\odot\x) \Vert, \\
\end{aligned}
\label{eqn:quadratic_derivative}
\end{equation}
where $\u^h$ is the output the $h$-th layer of networks, and the Hadamard product\footnote{https://rdrr.io/cran/FastCUB/man/Hadprod.html} between the given matrix and vector corresponds to multiply every column of the matrix by the corresponding element of the vector. Because $\x$ is around 0, we assume that $\u^{h}$ is infinitesimal, then Eq.\eqref{eqn:quadratic_derivative} becomes
\begin{equation}
\begin{aligned}
         \Vert \frac{\partial f_2}{\partial \x}\Vert_{\x = 0}  \leq & \Vert (\bm{W}^{H,r} \odot \bm{b}^{H,g}+ \bm{W}^{H,g} \odot \bm{b}^{H,r})\Vert \cdots  \\
    & \Vert (\bm{W}^{1,r} \odot \bm{b}^{1,g}+ \bm{W}^{1,g} \odot \bm{b}^{1,r}) \Vert .
\end{aligned}
\label{eqn:quadratic_derivative_reduced}
\end{equation}
Per the initialization of the ReLinear, when $\bm{W}^{h,g}=\bm{0}, \bm{b}^{h,r}=\bm{1}$, $\Vert \frac{\partial f_2}{\partial \x} \Vert = \Vert \frac{\partial f_1}{\partial \x} \Vert$, which means that at the beginning, the amplification effects of conventional and quadratic networks are equal.

Furthermore, we derive the Frobenius norms of all concerning matrices and vectors in Eqs. \eqref{eqn:linear_derivative_reduced} and \eqref{eqn:quadratic_derivative_reduced}. Suppose that the parameters of two networks are i.i.d. Gaussian distributed with unit variance, the typical magnitude of the Euclidean norm of $\bm{W} \in \mathbb{R}^{m\times n}$ is $\mathcal{O}(\sqrt{mn})$, and the typical magnitude of the Euclidean norm of $\bm{W}\odot\bm{b}$, where $\bm{W} \in \mathbb{R}^{m\times n}, \bm{b}\in \mathbb{R}^{n\times 1}$ is also $\mathcal{O}(\sqrt{mn})$. Thus, we estimate the magnitudes of $\Vert \frac{\partial f_1}{\partial \x}\Vert_{\x = 0}$ and $\Vert \frac{\partial f_2}{\partial \x}\Vert_{\x = 0}$ as
\begin{equation}
\begin{cases}
     &   \Vert \frac{\partial f_1}{\partial \x}\Vert_{\x = 0} \approx
    \prod_{h=1}^H \sqrt{d_h d_{h-1}} \\
    & \Vert \frac{\partial f_2}{\partial \x}\Vert_{\x = 0} \approx
    \prod_{h=1}^H 2\sqrt{d_h d_{h-1}} = 2^H \prod_{h=1}^H \sqrt{d_h d_{h-1}}.
\end{cases}
\label{eqn:amplifying}
\end{equation}
Based on Eq. \eqref{eqn:amplifying}, the amplification effect of a quadratic network is exponentially higher than that of a conventional network, which accounts for the magnitude explosion of a quadratic network during training. The ReLinear regularizes quadratic terms by a prescribed initialization and weight/gradient shrinkage, which favorably avoids the exponential growth of the amplification effect at least at the early training stage. This is why the ReLinear can work.

\textbf{Convergence.} The ReLinear strategy is essentially the stochastic gradient descent (SGD) with a specific initialization and adaptive learning rates for different parameters. The convergence behavior is mainly determined by adaptive learning rates and initialization. In Appendix, aided by the proofs in \cite{li2019convergence}, we show under what conditions the proposed ReLinear can converge and the associated convergence rate.

\section{EXPERIMENTS}
In this section, we first conduct analysis experiments (Runge function approximation and image recognition) to show the effectiveness of the ReLinear strategy in controlling quadratic terms and to analyze the performance of different schemes for the proposed ReLinear method. Then, encouraged by our theoretical analyses, we compare quadratic networks with conventional networks and the networks with quadratic activation to show that a quadratic network is a competitive model. 

\subsection{Analysis Experiments}

\subsubsection{Runge Function Approximation}

As mentioned earlier, a polynomial spline is used to replace a complete polynomial to overcome the Runge phenomenon. Since a quadratic network with ReLU activation is a polynomial spline, we implement a fully connected quadratic network to approximate the Runge function to verify the feasibility of our proposed training strategy in suppressing quadratic terms. This experiment is to approximate a univariate function, which enables us to conveniently compute the degree of the output function produced by a quadratic network and monitor its change. 

In total, $33$ points are sampled from $[-5,5]$ with an equal distance. The width of all layers is $8$. The depth is $5$ such that the degree of the output function is $2^5=32$, meeting the minimum requirement to fit $33$ instances. We compare the proposed strategy (ReLinear$^{sw}$-$l_1$, ReLinear$^{sw}$-$l_2$, and ReLinear$^{sg}$) with the regular training. In ReLinear$^{sw}$-$l_1$, $\gamma_r = \gamma_g = \gamma_b = 0.0003$ and $\alpha_g = \alpha_b = 0.0001$. In ReLinear$^{sw}$($l_2$), $\gamma_r = \gamma_g = \gamma_b = 0.0003$ and $\beta_g = \beta_b = 0.0001$. In ReLinear$^{sg}$, the learning rates are set as $\gamma_r = 0.0003, \gamma_g = \gamma_b = 0.00015$. In contrast, we configure a learning rate of $\gamma_r = \gamma_g = \gamma_b = 0.0003$ for all parameters in regular training. The total iteration is $30,000$ to guarantee convergence.

\begin{figure}[htb]
\center{\includegraphics[width=\linewidth] {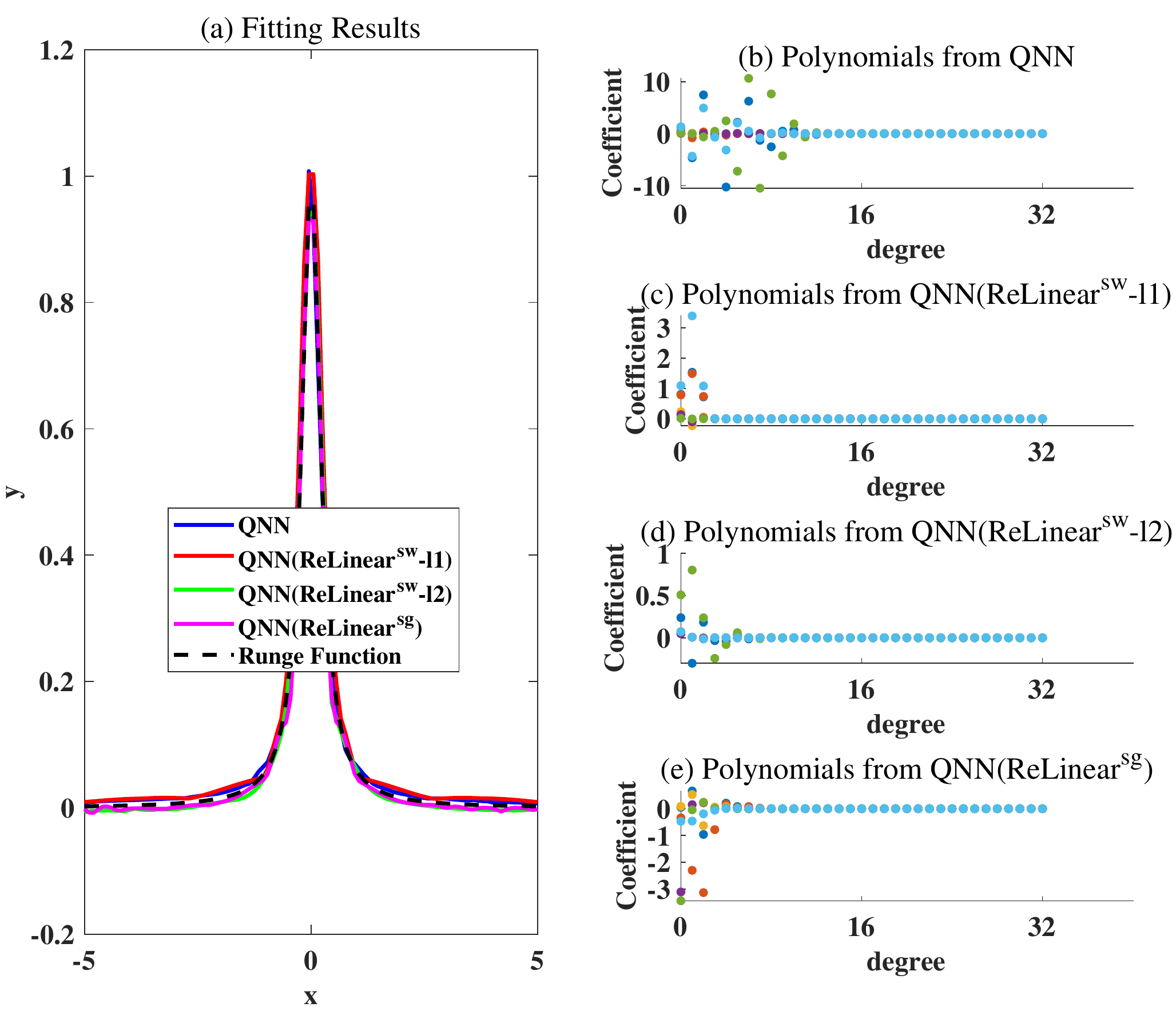}}
\caption{(a) Fitting by QNN via different training strategies. (b)-(e) Coefficients of randomly selected piecewise polynomials from QNN trained with regular training and the proposed strategies (ReLinear$^{sw}$-$l_1$, ReLinear$^{sw}$-$l_2$ and ReLinear$^{sg}$).}
\label{RungeFitting}
\end{figure}

The results are shown in Figure \ref{RungeFitting}. As a spline can avoid the Runge phenomenon, regardless of how the QNN with ReLU activation is trained, it can fit the Runge function desirably without oscillations at edges, as shown in Figure \ref{RungeFitting}(a). Furthermore, in Figure \ref{RungeFitting}(b)-(e), we examine coefficients of randomly selected polynomials contained by functions of QNNs at different pieces. It is observed from Figure \ref{RungeFitting}(b) that the polynomials associated with the regular training have unignorable high-order terms. This is counter-intuitive because a QNN partitions the interval into many pieces (24 pieces based on our computation). Since only a few samples lie in each piece, it suffices to use a low-degree polynomial in each piece. This might be due to that the space of low-degree polynomials is a measure-zero subspace in the high-degree polynomials. Thus, it is hard to obtain a low-degree polynomial fit straightforwardly.
Next, we observe that coefficients of high degrees are significantly suppressed in Figure \ref{RungeFitting}(c)-(e) than those in Figure \ref{RungeFitting}(b). At the same time, the magnitudes of coefficients of low degrees in Figure \ref{RungeFitting}(c)-(e) are put down. Such observations imply that all the proposed strategies can effectively control the quadratic terms as expected.  

\begin{table}[htb]
\centering

\caption{RMSE valuse of different training strategies.}
  \begin{tabular}{c|c|c|c}
    \hline
    Training &  ReLinear$^{sw}$($l_1$)  & ReLinear$^{sw}$($l_2$) & ReLinear$^{sg}$  \\
        \hline
    RMSE & 0.0656   & 0.0426 & 0.0205  \\

    \hline

  \end{tabular}
  \label{appro_performance}
\end{table}

Next, we quantitatively compare the approximation errors of different training strategies using the rooted mean squared error (RMSE). We evenly sample 100 instances from $[-5,5]$ as test samples, none of which appears in the training. Table \ref{appro_performance} shows RMSE values of three realizations for ReLinear. It can be seen that ReLinear$^{sg}$ achieves the lowest error, suggesting that ReLinear$^{sg}$ is better at balancing between suppressing quadratic terms and maintaining approximation precision.

\subsubsection{Image Classification}

Here, we focus on an image recognition task to further confirm the effectiveness of the proposed strategy. We build a quadratic ResNet (QResNet for short) by replacing conventional neurons in ResNet with quadratic neurons and keeping everything else intact. We train the QResNet on the CIFAR10 dataset. Our motivation is to gauge the characteristics and performance of different variants of the ReLinear method through experiments.

Following configurations of \cite{he2016deep}, we use batch training with a batch of 128. The optimization is stochastic gradient descent using Adam \cite{kingma2014adam}. $\gamma_r$ is set to $0.1$. The total number of epochs is $200$. In the $100$-th and $150$-th epoch, $\gamma_r, \gamma_g, \gamma_b$ decrease $1/10$ at the same time. Because training curves share the same trend as the testing curves, we only show the testing curves here for conciseness. 

\textit{Tuning ReLinear$^{sg}$.} Here, with QResNet20, we study the impact of $\gamma_g, \gamma_b$ on the effectiveness of ReLinear$^{sg}$. Without loss of generality, we set $\gamma_g=\gamma_b$. The lower $\gamma_g$ and $\gamma_b$ are, the more quadratic weights are constrained. We respectively set $\gamma_g, \gamma_b$ to $\{10^{-1}, 10^{-2}, 10^{-3}, 10^{-4}, 10^{-5}\}$ for a comprehensive analysis. The resultant accuracy curves are shown in Figure \ref{fig:compare_sg}. It can be seen that when $\gamma_g$ is large ($\gamma_g=\gamma_b=10^{-1}, 10^{-2}, 10^{-3}$), the training is quite unstable, the accuracy score jumps severely. However, as $\gamma_g$ and $\gamma_b$ go low, \textit{i.e.}, $\gamma_g=\gamma_b=10^{-4}, 10^{-5}$, the training curves become stabilized, mirroring that the high-order terms are well controlled. The best performance (error $7.78\%$) is achieved when $\gamma=10^{-4}$, consistent with the trend in Figure \ref{fig:guaranteed}.

\begin{figure}[htb]
\center{\includegraphics[width=\linewidth] {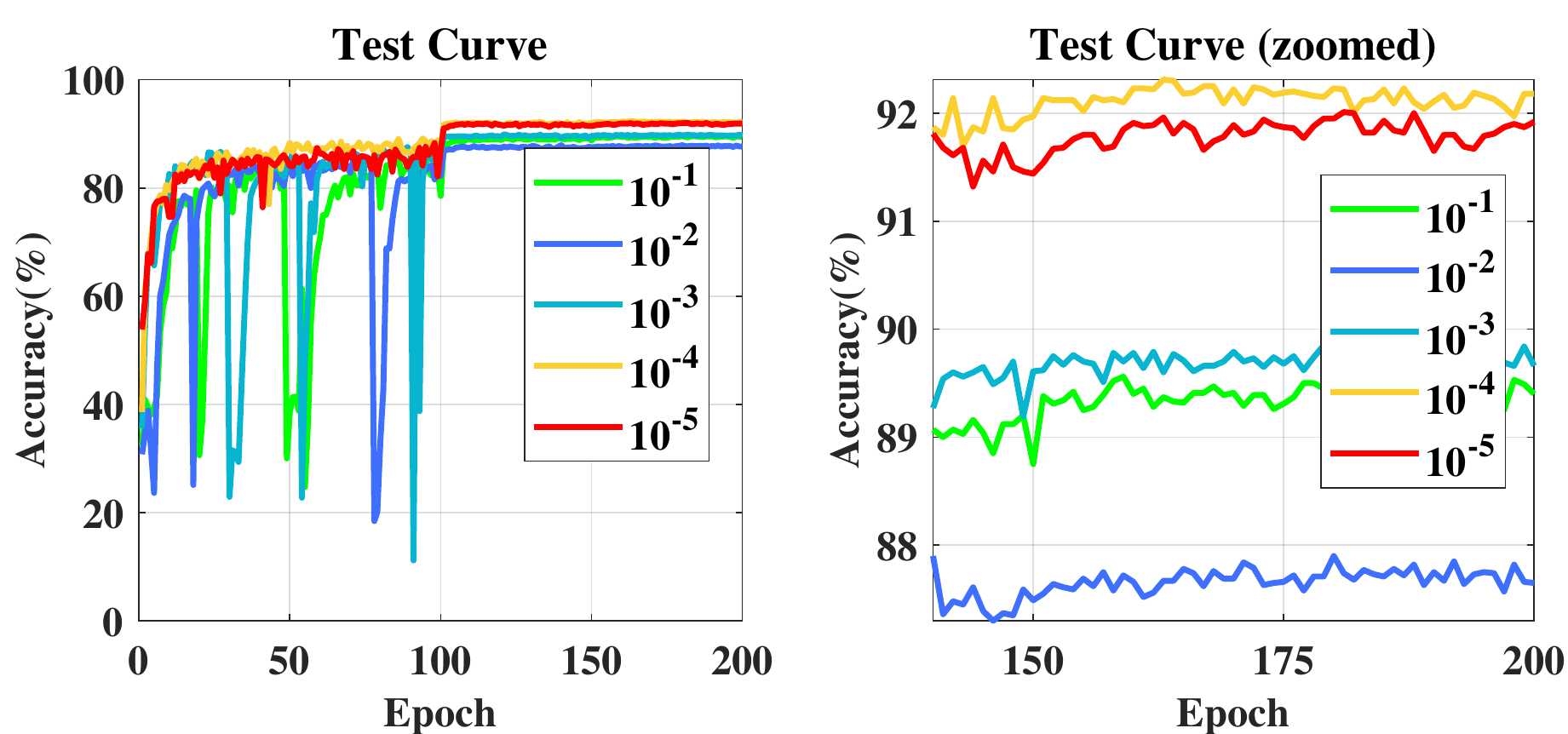}}
\caption{Accuracy curves obtained from different learning rates ($\gamma_g, \gamma_b$) for ReLinear$^{sg}$.}
\label{fig:compare_sg}
\end{figure}

\textit{Tuning ReLinear$^{sw}$.} Here, we investigate the impact of different parameters for ReLinear$^{sw}$-$l_1$ and ReLinear$^{sw}$-$l_2$, respectively. For both ReLinear$^{sw}$-$l_1$ and ReLinear$^{sw}$-$l_2$, $\gamma_r$, $\gamma_g$, and $\gamma_b$ are set to $0.1$, and decay $1/10$ at epochs $100$ and $150$ at the same time. Concerning parameters of ReLinear$^{sw}$-$l_1$, let $\alpha_g=\alpha_b$ which are respectively set to $\{10^{-1}, 10^{-2}, 10^{-3}, 10^{-4}, 10^{-5}\}$. For ReLinear$^{sw}$-$l_2$, we cast $\beta_g=\beta_b$ to $\{0.1, 0.5, 0.9, 0.99\}$, respectively. If the $\beta_g=\beta_b$ equal to $1$, the quadratic weights will oscillate around 0. The accuracy curves for ReLinear$^{sw}$-$l_1$ and ReLinear$^{sw}$-$l_2$ are shown in Figure \ref{fig:compare_sw}. It is observed that the $l_1$-norm is not good at stablizing the training in this task, while an appropriate $l_2$ norm, \textit{i.e.}, $\beta=0.9$ and $\beta=0.99$ manage to eliminate the large oscillation. The lowest error $8.21\%$ of ReLinear$^{sw}$ is achieved by the $l_2$ norm at $\beta_g=\beta_b=0.9$, which is worse than the lowest error $7.78\%$ of ReLinear$^{sg}$. 

\begin{figure}[htb]
\center{\includegraphics[width=\linewidth] {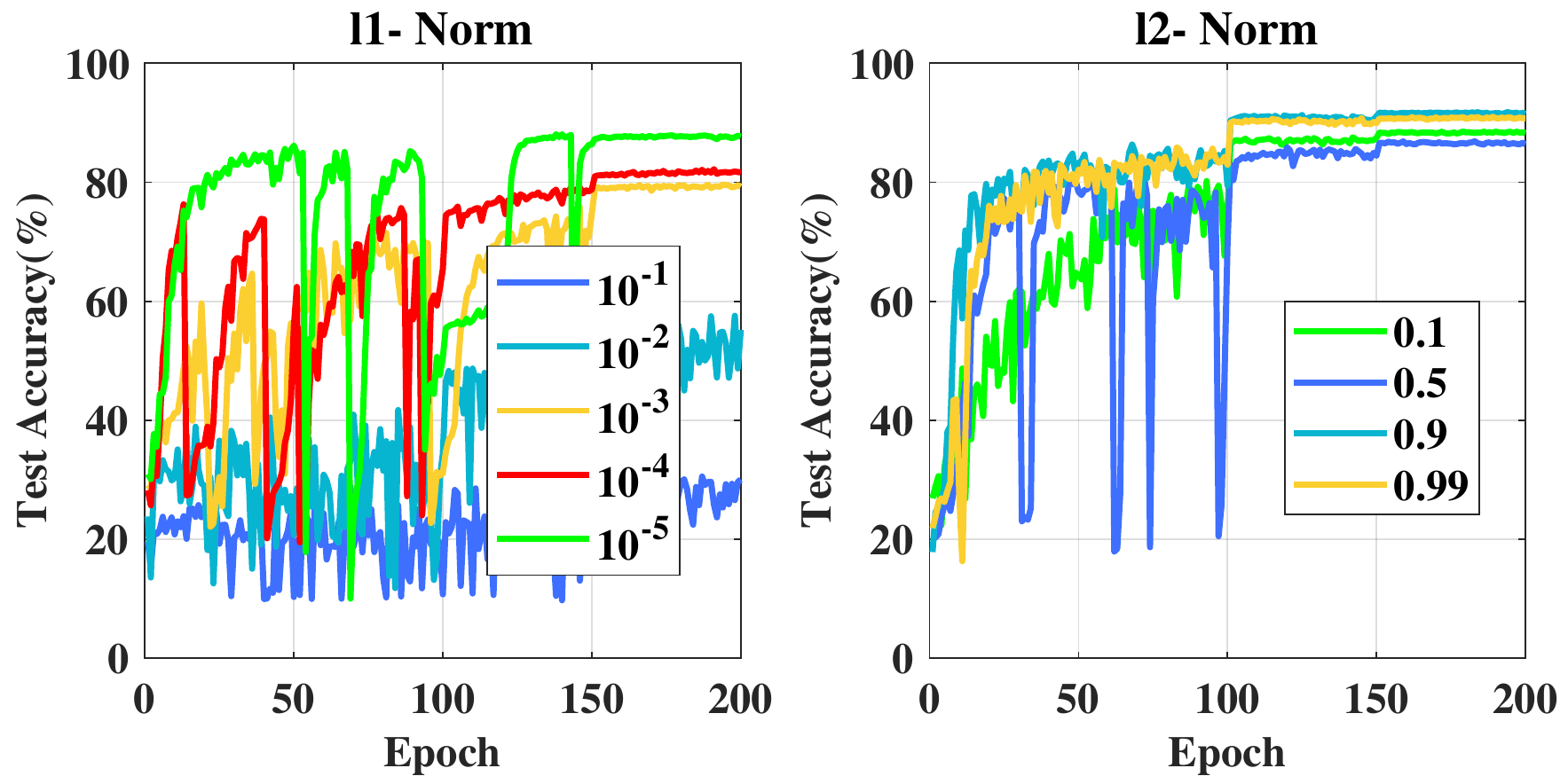}}
\caption{Left: accuracy curves obtained from different parameters ($\alpha_g, \alpha_b$) for ReLinear$^{sw}$-$l_1$. Right: accuracy curves obtained from different parameters ($\beta_g, \beta_b$) for ReLinear$^{sw}$-$l_2$.}
\label{fig:compare_sw}
\end{figure}

\begin{figure*}[htb]
\center{\includegraphics[width=0.7\linewidth] {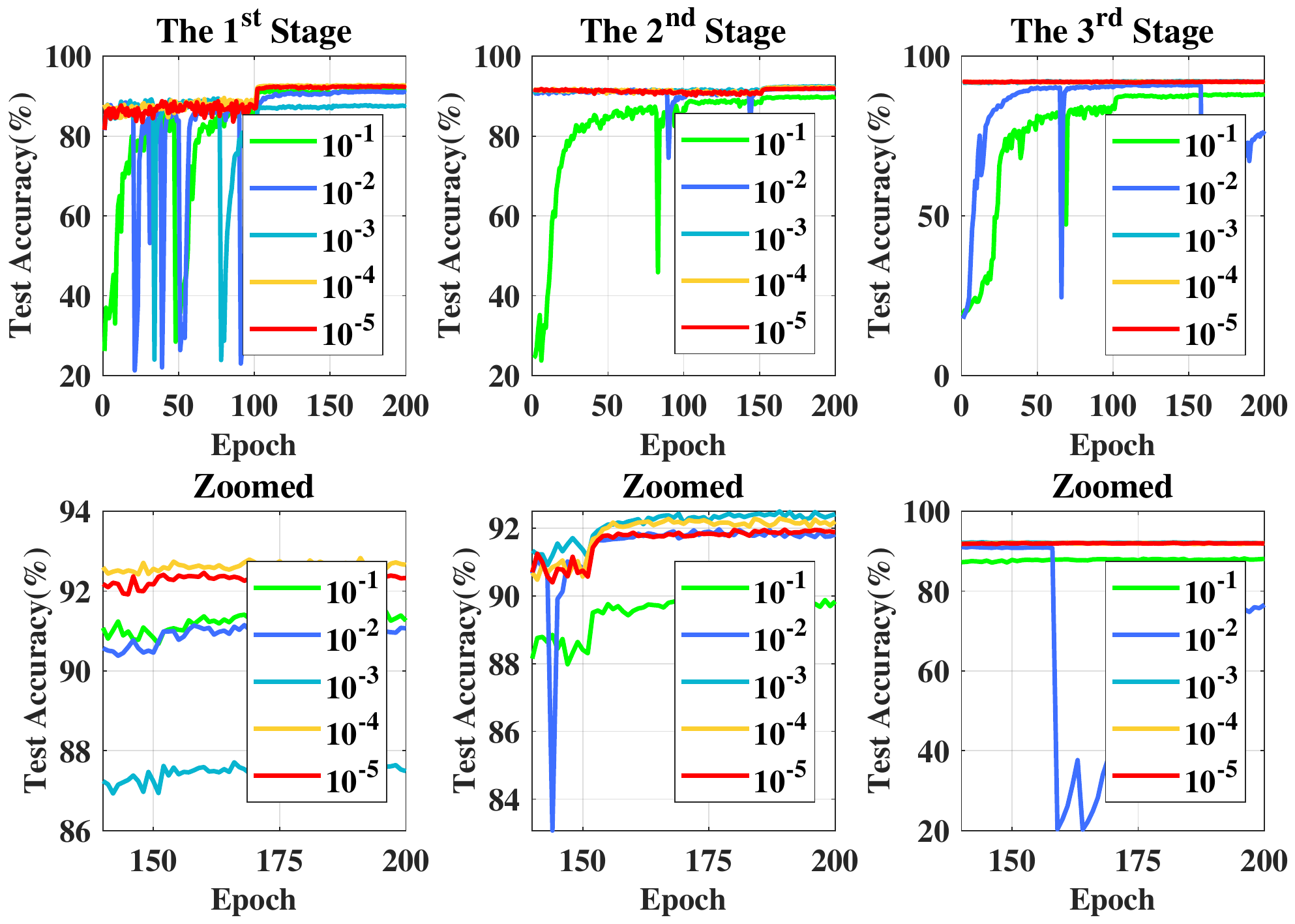}}
\caption{Accuracy curves from different learning rates by transferring weights from different stages.}
\label{fig:TL}
\vspace{-0.6 cm}
\end{figure*}

\textit{Weight Transfer.} As mentioned earlier, weight transfer can also be used to train a quadratic network. Because the training of a conventional ResNet has three stages (1-100 epochs, 101-150 epochs, 151-200 epochs), weight transfer also has three choices, corresponding to transferring weights from which stage. We evaluate all three choices. After transferring, the learning rate $\gamma_r$ will inherit the learning rate of the transferred model and then decay $1/10$ when the training moves to the next stage. Still, we set $\gamma_g=\gamma_b$ to $\{10^{-1}, 10^{-2}, 10^{-3}, 10^{-4}, 10^{-5}\}$ for a comprehensive analysis. Here we show the accuracy curves in Figure \ref{fig:TL}. There are two observations from Figure \ref{fig:TL}. First, transferred parameters can stabilize the training provided appropriate $(\gamma_g, \gamma_b)$. For the same $\gamma_g$, weights transferred from the later stage make the training more robust than those transferred from the earlier stage. This is because transferred parameters from the later stages have been good for the model. Thus, there is less need to optimize the quadratic terms, thereby avoiding the risk of explosion. The second highlight is that the best performance comes from weights transferred from the first stage, which suggests that the model can be improved if quadratic terms play a significant role. The lowest errors by transferring from three stages are $7.18\%$, $7.5\%$, and $7.75\%$. 

\textit{ReLinear+ReZero}
Especially, for training a residual quadratic network, the proposed ReLinear method can be integrated with the recent proposal called ReZero (residual with zero initialization, \cite{bachlechner2020rezero}), which dedicates to training a residual network by revising the residual propagation formula from $x^{h+1}=x^{h}+F(x)$ to $x^{h+1}=x^{h}+\zeta_h F(x)$, where $\zeta_h$ is initialized to be zero such that the network is an identity at the beginning. ReZero can not only speed up the training but also improve the performance of the model. Here, we evaluate the feasibility of training a quadratic residual network with ReLinear+ReZero. We adopt ReLinear$^{sg}$ and let $\gamma_g=\gamma_b$. We respectively set $\gamma_g, \gamma_b$ to $\{10^{-1}, 10^{-2}, 10^{-3}, 10^{-4}\}$. The results are shown in Figure \ref{fig:compare_relinear_rezero}. Comparing Figures \ref{fig:compare_relinear_rezero} and \ref{fig:compare_sg}, we surprisingly find that QResNet20 trained via ReLinear+Rezero has fewer oscillations. Previously, when $\gamma_g= \gamma_b=10^{-2}, 10^{-3}$, the accuracy curves from the only ReLinear still suffer unrest oscillations, while curves from the ReLinear+ReZero do not. 

\begin{figure}[htb]
\center{\includegraphics[width=\linewidth] {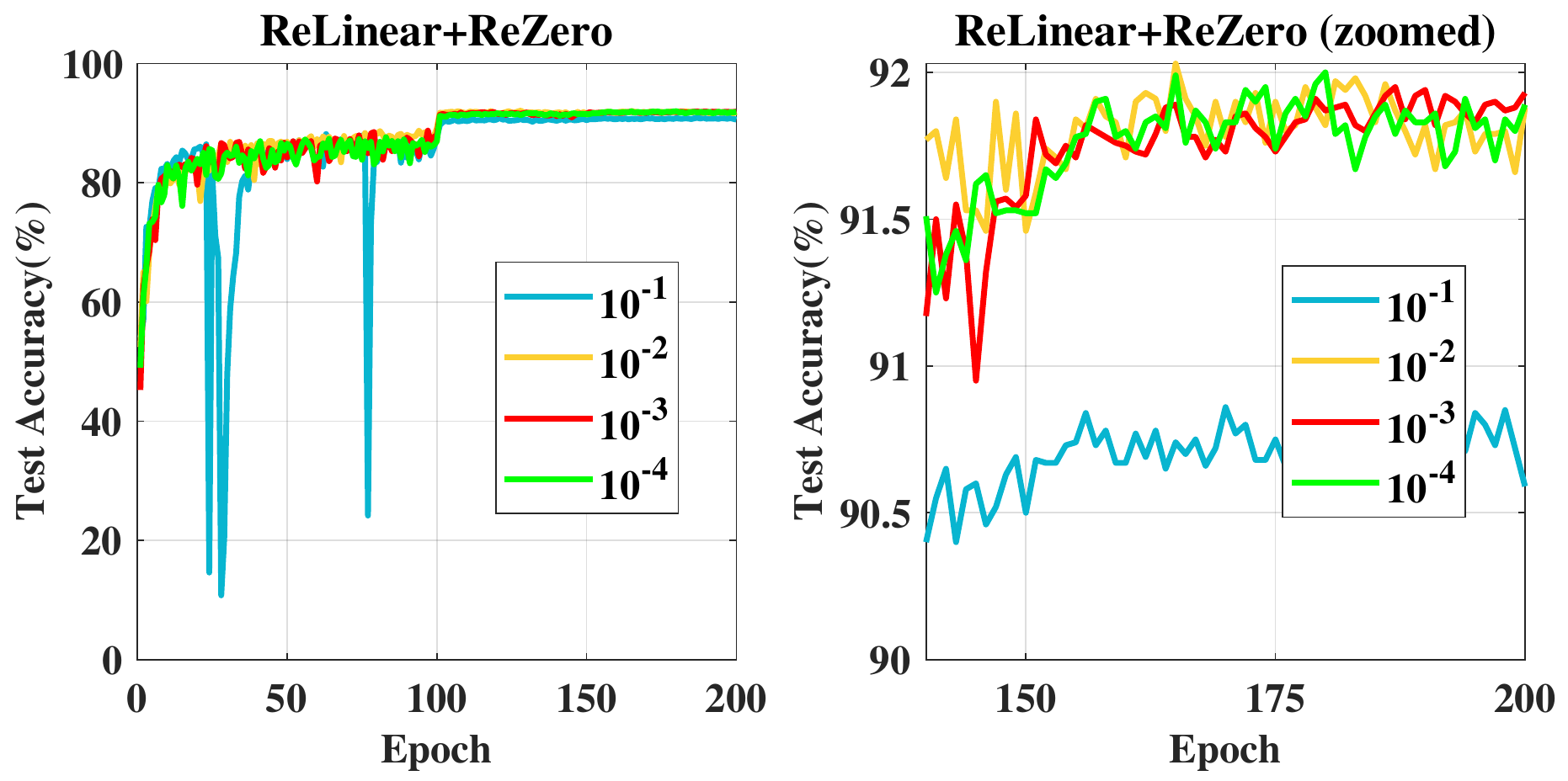}}
\caption{Accuracy curves obtained from different learning rates ($\gamma_g, \gamma_b$) for the ReLinear+ReZero method.}
\label{fig:compare_relinear_rezero}
\vspace{-0.2 cm}
\end{figure}


\subsubsection{Training Stability} 
Here, we compare our quadratic model with a model using quadratic activation on the CIFAR100 dataset. The VGG16 \cite{simonyan2014very} is used as the test bed. We obtain the VGG using quadratic activation by directly revising activation function, and we prototype the Quadratic-VGG16 by replacing the conventional neurons with quadratic neurons in VGG16. All training protocols of the quadratic VGG16 and the VGG16 using quadratic activation are the same as the authentic VGG16. We test three learning rates for VGG16 using quadratic activation: $0.01, 0.03, 0.05$. For the Quadratic-VGG16, we set the learning rate $\gamma_r$ to $0.1$ and $\gamma_g, \gamma_b$ to $10^{-6}$. The results are shown in Table \ref{qa}. We find that VGG16 using quadratic activation does not converge regardless of different learning rates, therefore, no errors can be reported. Indeed, direct training a network using quadratic activation suffers the magnitude as well due to the exponentially high degree. Preferably, this problem is overcome by the proposed training strategy in our quadratic network.

\begin{table}[htb]
 \centering
\caption{Error($\%$) of the quadratic VGG and VGG using quadratic activation on CIFAR100 validation set.}
 \begin{tabular}{ |c|c|  }
 \hline
 Network   & Error (\%) \\
  \hline
  VGG16(quadratic activation) l.r=0.05  &  no converge   \\
  VGG16(quadratic activation) l.r=0.03  &  no converge   \\  
  VGG16(quadratic activation) l.r=0.01    & no converge \\
  \hline
   Quadratic-VGG16   & 28.33 \\ 

 \hline
   \end{tabular}
 \vspace{1ex}
  \label{qa}
\end{table}

\subsection{Comparative Study}

Here, we validate the superiority of a quadratic network over a conventional one, with classification experiments on two data sets: CIFAR10 and ImageNet, and a depth estimation experiment. The quadratic network is implemented as a drop-in replacement for the conventional network; the only difference is the neuron type. Despite the straightforward replacement, aided by the proposed ReLinear, a quadratic network performs much better than its counterpart.

\textit{CIFAR10.} In this experiment, we systematically compare our QResNet with the ResNet. We follow the same protocol as the ResNet to train the QResNet, such as batch size and epoch number. As implied by our preceding experimental results that ReLinear$^{sg}$ is generally better than ReLinear$^{sw}$, we adopt ReLinear$^{sg}$. $\gamma_g=\gamma_b=10^{-4}, \gamma_g=\gamma_b=10^{-5}$, $\gamma_g=\gamma_b=10^{-5}$, $\gamma_g=\gamma_b=10^{-5}$ are set for QResNet20, QResNet32, QResNet56, and QResNet110, respectively. For all quadratic models, we test the weight transfer (the first stage) and a random initialization for their linear parts. We also implement ReLinear$^{sg}$+ReZero, where $\gamma_g=\gamma_b=10^{-5}, \gamma_g=\gamma_b=10^{-4}$, $\gamma_g=\gamma_b=10^{-4}$, $\gamma_g=\gamma_b=10^{-6}$ are set for QResNet20, QResNet32, QResNet56, and QResNet110, respectively. Table \ref{Cifar10_results} summarizes the results. Regardless of ways of initialization, all quadratic models are better than their counterparts, which is consistent with our analyses that quadratic neurons can improve model expressibility. Again, the improvement by quadratic networks is warranted because the employment of the proposed strategy makes the conventional model a special case of quadratic models. At least, a quadratic model will deliver the same as the conventional model. Furthermore, combined with the weight transfer, a quadratic network trained via ReLinear$^{sg}$ can surpass a conventional network by a larger margin, which suggests that a well-trained conventional model can effectively guide the training of a quadratic network.

There are two aspects in designing a network: structure and neurons. While it is widely recognized that going deep (structure design) can greatly increase the expressivity of a network, our analysis shows that adopting quadratic neurons (neuron design) can also enhance the expressivity. Both going deep and adopting quadratic neurons can increase the number of the network's parameters. However, which way can make a model gain better performance with fewer parameters depends on the architecture, training strategy, and so on. We do not claim that using quadratic neurons is always better than going deep. For example, as Table \ref{Cifar10_results} shows, for a conventional network, the errors of (ResNet20, ResNet56) are (8.75\%, 6.97\%), and their parameters are (0.27M, 0.86M). In contrast, the errors and parameters of (ResNet20, QResNet20) are (8.75\%, 7.17\%) and (0.27M, 0.81M), which means that for ResNet20, increasing depth is better than using quadratic neurons. On the other hand, the errors and parameters of (ResNet32, QResNet32) are (7.51\%, 6.38\%) and (0.46M, 1.39M), and the errors and parameters of (ResNet32, ResNet110) are (7.51\%, 6.61\%) and (0.46M, 1.7M). In this case, QResNet32 achieves a lower error with fewer parameters, suggesting that adopting quadratic neurons is better than going deep. The use of quadratic neurons offers a new direction in designing a network. Users can choose when to use new neurons and when to revise structures based on their needs.

In addition, we also compare QResNet with residual networks using three quadratic neuron designs \cite{tsapanos2018neurons, bu2021quadratic, xu2022quadralib, chrysos2021deep} on the CIFAR10 validation set, which are referred to as Parabolic-ResNet and PolyResNet, respectively. Please note that the neuron design in \cite{jiang2020nonlinear, mantini2021cqnn} are not compared because their parameter complexity $\mathcal{O}(n^2)$ is too high to be used in constructing a deep network. Both Parabolic-ResNet and PolyResNet are trained with the ReLinear algorithm because normally-trained Parabolic-ResNet and PolyResNet do not converge. Results are summarized in Table \ref{Cifar10_results}. It can be seen that as the depth increases, the performance of Parabolic-ResNet and PolyResNet does not necessarily improve. Moreover, with the same architecture, our quadratic models are consistently and substantially better than Parabolic-ResNet and PolyResNet. Since the key difference between our design and the designs in \cite{tsapanos2018neurons, chrysos2021deep} is the extra power term. Our results imply that the extra power term is indispensable.

\begin{table}[htb]
 \centering
\caption{Image classification by ResNet, QResNet, and ResNet using other quadratic neuron designs on the CIFAR10 validation set.}
 \begin{tabular}{ |c|c|c|c|  }
 \hline
 Network  &Paras & Time &  Error (\%) \\
  \hline
  ResNet20 \cite{he2016deep}  & 0.27M & 10.43 & 8.75  \\
  \hline
  ResNet32 \cite{he2016deep} & 0.46M & 14.39 & 7.51   \\
  \hline
  ResNet56 \cite{he2016deep}  & 0.86M & 21.61 & 6.97    \\
  \hline
  ResNet110 \cite{he2016deep} & 1.7M & 40.81 & 6.61   \\ 
  \hline
  ResNet1202\cite{he2016deep}  & 19.4M & 382.67 & 7.93   \\   
 
  \hline 
      Parabolic-ResNet20 \cite{tsapanos2018neurons}   & 0.54M &8.98 &  7.89\\
    Parabolic-ResNet32 \cite{tsapanos2018neurons}   & 0.54M &12.48 &   7.25\\
    Parabolic-ResNet56 \cite{tsapanos2018neurons}   & 0.54M &19.79 &  8.89\\  
        \hline
    PolyResNet20 \cite{chrysos2021deep}   & 0.54M &10.59 & 9.80 \\
    PolyResNet32 \cite{chrysos2021deep}  & 0.54M &15.29 & 7.62  \\
    PolyResNet56 \cite{chrysos2021deep}  & 0.54M &25.67 & 8.31 \\ 
  \hline 
    QResNet20 (r. i., ReLinear)  & 0.81M & 10.87 & 7.78 \\
    QResNet20 (r. i., ReLinear+ReZero) & 0.81M  & 11.18 &  7.97  \\
      QResNet20 (w. t., ReLinear) & 0.81M  & - &  \textbf{7.17}   \\
      \hline
    QResNet32 (r. i., ReLinear)   & 1.39M & 15.56 &  7.18  \\
    QResNet32 (r. i., ReLinear+ReZero)   & 1.39M & 17.07  &  6.90  \\
      QResNet32 (w. t., ReLinear) & 1.39M  & - &  \textbf{6.38}\\
      \hline
  QResNet56 (r. i., ReLinear) & 2.55M & 27.72  & 6.43   \\
  QResNet56 (r. i., ReLinear+ReZero) & 2.55M & 28.24 & 6.34   \\
  QResNet56 (w. t., ReLinear) & 2.55M & -  & \textbf{6.22}   \\

  \hline
  QResNet110 (r. i., ReLinear) & 5.1M & 55.18  &   6.36\\
  QResNet110 (r. i., ReLinear+ReZero) & 5.1M & 58.35  &   6.12\\
  QResNet110 (w. t., ReLinear) & 5.1M & -  & \textbf{5.44}   \\
  \hline  
   \end{tabular}
   
 \vspace{1ex}
 {Note: r. i. refers to random initialization, and w. t. refers to weight transfer. Time denotes the training time per epoch. \par}   
  \label{Cifar10_results}
\end{table}

Currently, an individual quadratic neuron has $3$ times parameters relative to an individual conventional neuron, which causes that a quadratic model is three times larger than the conventional model. To reduce the model complexity, we simplify the quadratic neuron by eradicating interaction terms, leading to a compact quadratic neuron: $q(\x)=\sigma(\x^\top \textbf{w}^{r}+(\x \odot \x)^\top \textbf{w}^{b}+c)$. The number of parameters in a compact quadratic neuron is twice that in a conventional neuron. Similarly, as a drop-in replacement, we implement QResNet20, QResNet32, and QResNet56 with compact quadratic neurons referred to as Compact-QResNet, and compare them with conventional models. The linear parts $\textbf{w}^r, c$ are initialized with a conventional ResNet. We also use ReLinear$^{sg}$ to train the Compact-QResNet and adopt $\gamma_b=10^{-4}$, $\gamma_b=10^{-5}$, and $\gamma_b=10^{-5}$ for the Compact-QResNet20, Compact-QResNet32, and Compact-QResNet56, respectively. It is seen from Table \ref{tab:compact_qnn} that Compact-QResNets are overall inferior to QResNets, but the margins between QResNet32/56 and Compact-QResNet32/56 are slight. It is noteworthy that both Compact-QResNet32 and QResNet32 are better than ResNet110, despite fewer parameters. 

\begin{table}[htb]
 \centering
\caption{Image classification error($\%$) by Compact-QResNet on CIFAR10 validation set.}
 \begin{tabular}{ |c|c|c|  }
 \hline
 Network   & Params &  Error (\%) \\
  \hline
  QResNet20 & 0.81M  &  \textbf{7.17}   \\

  Compact-QResNet20 & 0.54M    & 7.76\\
  
  \hline
  QResNet32 & 1.39M  &  \textbf{6.38}\\

  Compact-QResNet32 & 0.92M  &   6.56 \\
  \hline
  
    QResNet56 & 2.55M  & \textbf{6.22}   \\
  Compact QResNet56 & 1.92M  & 6.30  \\
 \hline
   \end{tabular}
 \vspace{1ex}
  \label{tab:compact_qnn}
\end{table}

Furthermore, we would like to underscore that the reason for improvements produced by quadratic networks is not increasing the number of parameters in a brute-force manner but facilitating the enhanced feature representation. The enhanced feature representation in some circumstances can also enjoy model efficiency. For example, purely stacking layers does not definitely facilitate the model performance, evidenced by the fact the ResNet1202 is worse than ResNet34, ResNet56, and ResNet110. Moreover, we implement the ResNet32 and ResNet56 with $1.5, 2, 2.5$ times channel numbers of the original. As Table \ref{tab:efficiency} shows, increasing channel numbers of the ResNets does not produce as much gain as using quadratic models with the approximately similar model size. We think that this is because a widened network is just a combination of more linear operations in the same layer, which does not change the type of representation too much.

\begin{table}[htb]
 \centering
\caption{Image classification error($\%$) by ResNets with increased channel numbers on CIFAR10 validation set..}
 \begin{tabular}{ |c|c|c|c|  }
 \hline
 Network & Channel Number  & Params &  Error (\%) \\
\hline
  ResNet32 (16) & $\times 1$ & 0.46M  &  7.51\\

  ResNet32 (24) & $\times 1.5$ & 1.04M  &  6.45  \\

  ResNet32 (32) & $\times 2$ & 1.84M  &  6.75 \\

  ResNet32 (40) & $\times 2.5$ & 2.88M  &  7.44  \\
  QResNet32 & - & 1.39M  &  \textbf{6.38} \\
 \hline
  ResNet56 (16) & $\times 1$ & 0.86M  &  6.97\\

  ResNet56 (24) & $\times 1.5$ & 1.92M  &  6.84  \\

  ResNet56 (32) & $\times 2$ & 3.40M  &  \textbf{6.12}  \\

  ResNet56 (40) & $\times 2.5$ & 5.31M  &  6.58 \\
  
  QResNet56 & - & 2.55M  & 6.22   \\
 \hline

   \end{tabular}
 \vspace{1ex}
  \label{tab:efficiency}
\end{table}

\textit{ImageNet.} Here, we confirm the superior model expressivity of quadratic networks with experiments on ImageNet. The ImageNet dataset \cite{deng2009imagenet}
is made of $1.2$ million images for training and $50,000$ images for validation, which are from $1,000$ classes. For model configurations, we follow those in the ResNet paper \cite{he2016deep}. We set the batch size to 256, the initial learning rate to 0.1, the weight decay to 0.0001, and the momentum to 0.9. For ReLinear$^{sg}$, we set $\gamma_b$ and $\gamma_r$ to $10^{-5}$. We adopt the standard 10-crop validation. As seen in Table \ref{ImageNetResults}, similar to what we observed in CIFAR10 experiments, direct replacement with quadratic neurons can promote the performance, which confirms that the quadratic network is more expressive than the conventional network. For example, QResNet32 is better than ResNet32 with a considerable margin ($>1\%$). In addition, the performance of QResNet32 is also better than Poly-QResNet32 and Para-QResNet32, which validates that our design of quadratic networks is competitive.

\begin{table}[htb]
 \centering
\caption{Image classification by different networks on the ImageNet validation set.}
\scalebox{0.9}{
 \begin{tabular}{ |l|c|l|c|  }
 \hline
 Network  &  Error($\%$) & Network  &  Error($\%$) \\
  \hline
  plain-20 &  27.94 & plain-32 & 28.54\\
  \hline
  ResNet20   & 27.88 & ResNet32 & 25.03 \\
  \hline
  \fl{Parabolic-ResNet20 \cite{tsapanos2018neurons}} & 28.22 & \fl{Parabolic-ResNet32 \cite{tsapanos2018neurons}}  & 24.64\\
  \hline
  \fl{Poly-ResNet20 \cite{chrysos2021deep}}   & 27.82 & \fl{Poly-ResNet32 \cite{chrysos2021deep}} & 24.63\\
  \hline
  QResNet20 &  \textbf{27.67} & QResNet32  &  \textbf{24.01}\\
  \hline
   \end{tabular}}
   
 \vspace{1ex}
 {The errors of ResNets are reported by the official implementation, while others are results of our implementation. \par}
 
 \label{ImageNetResults}
 \vspace{-2mm}
\end{table}

\textit{KITTI.} Depth estimation \cite{xu2022multi} has been widely applied in many applications, \textit{e.g.}, assisting agents to reconstruct the surrounding environment and estimating horizontal boundaries or location of the vanishing point to understand a given scene. Here, we validate the effectiveness of the proposed ReLinear strategy on the task of depth estimation. 

The dataset is KITTI which captures various road
environments in autonomous driving scenarios by the paired raw LiDaR scans and RGB images.
The images are of $1242\times375$ pixels.
With the split strategy \cite{eigen2014depth} for the KITTI dataset, the training set comprises 23,488 images from 32
scenes, while the test set contains 697 images from 29 scenes.

The baseline method is LapDepth, a well-performed benchmark in the field of depth estimation. The LapDepth uses ResNext101 as the pre-trained encoder and the Laplacian pyramidal residual structure as the decoder. Their idea is to leverage the ability of the Laplacian pyramid in highlighting the differences across different scale spaces and object boundaries. The depth map is progressively restored from coarse to fine by combining the depth residuals at each pyramid level, thereby overcoming ambiguities at the depth boundary. 

We modify the LapDepth model by just replacing the conventional neurons in ResNext101 with quadratic neurons in the encoder part. The resultant model is referred to as Quadratic-LapDepth.
We respect the parameter configuration in the original paper \cite{song2021monocular}. All the parameters in the Laplacian pyramidal decoder are initialized by Kaiming initialization \cite{he2015delving}. The model is trained
for 50 epochs from scratch using the
AdamW optimizer \cite{loshchilov2017decoupled} whose power and momentum are cast to 0.9 and 0.999, respectively. The batch size is 16. The weight decaying rate is 0.0005 for ResNext101 and zero for the Laplacian pyramidal decoder. The schedule for the learning rate is the polynomial decay with the power of 0.5, the initial value of $10^{-4}$, and the terminating value of $10^{-5}$.

\begin{table}[htb]
 \centering
\caption{Quantatitive Evaluation on the KITTI Dataset.}
 \begin{tabular}{ |c|c|c|c|  }
 \hline
 Model  & $\mathbf{S q} ~\mathbf{Rel}$ &  $\mathbf{R M S E}$  \\
  \hline
  LapDepth (reported officially)    & 0.212 & 0.446 \\
  \hline
  LapDepth (our replication)    & 0.214 & 0.458 \\
  \hline
  Quadratic-LapDepth & 0.200  & 0.427   \\
  \hline
   \end{tabular} 
  \label{KITTI_results}
\end{table}

The widely-used evaluation metrics in the field of depth estimation are 
\begin{equation}
\begin{aligned}
 & \mathbf{S q} ~\mathbf{Rel}=\frac{1}{T} \sum_{y \in T}\left\|y-y^{*}\right\|^{2} / y^{*}, \\
& \mathbf{R M S E}=\sqrt{\frac{1}{T} \sum_{y \in T}\left\|y-y^{*}\right\|^{2}} ,
\end{aligned}
\end{equation}
where $y^*$ and $y$ are the ground truth and the predicted depth map, respectively, and $T$ is the total number of effective pixels in the ground truth.

The experimental results are shown in Table \ref{KITTI_results}. There is a gap between the performance of our replication and what was officially reported of the LapDepth model. Regardless, the direct replacement with quadratic neurons in the encoder can substantially reduce the error, which confirms the power of quadratic neurons and the effectiveness of the proposed ReLinear on various learning tasks.

\section{DISCUSSION AND CONCLUSION}
Regarding the efficiency of quadratic networks, our opinion is that there must exist tasks that are suitable for quadratic networks such that a quadratic network is more efficient than a conventional one. Heuristically, there should be no universally superior type of neurons that can solve all machine learning tasks efficiently. For tasks whose quadratic or nonlinear features are significant, quadratic neurons should show efficiency, since it is easier to use quadratic neurons to learn quadratic or nonlinear features than conventional neurons. Previously, our group \cite{fan2020universal} proved a theorem that given the network with only one hidden layer, there exists a function that a quadratic network can approximate with a polynomial number of neurons, while a conventional network can only achieve the same level approximation with an exponential number of neurons. This theorem theoretically supports our opinion.

In this article, we have theoretically demonstrated the superior expressivity of a quadratic network over either the popular deep learning model or such a conventional model with quadratic activation. Then, we have proposed an effective and efficient ReLinear strategy for training a quadratic network, thereby improving its performance in various machine learning tasks. Finally, we have performed extensive experiments to corroborate our theoretical findings and confirm the practical gains with ReLinear. Future research includes up-scaling quadratic networks to solve more real-world problems and characterizing our quadratic approach in terms of its robustness, generalizability, and other properties.

\section*{Acknowledgement}

Dr. Fei Wang would like to acknowledge the support the AWS machine learning for a research award and the Google faculty research award. Dr. Rongjie Lai's research is supported in part by an NSF Career Award DMS–1752934 and DMS-2134168. Dr. Ge Wang would like to acknowledge the funding support from R01EB026646, R01CA233888, R01CA237267, R01HL151561, R21CA264772, and R01EB031102.

\bibliographystyle{ieeetr}
\bibliography{reference}
	

	



\appendix
Here, we analyze the convergence behavior of the ReLinear. Not only is the convergence result established but also the associated
convergence rate is derived. Our proofs are heavily based on \cite{li2019convergence}.

\textbf{Setting.} The problem of interest is to optimize the following
loss function:
\begin{equation}
    \underset{\x}{\min}~ \mathbb{E}_{\x} L(\bm{\Xi}, \x) ,
\end{equation}
where $\bm{\Xi}$ is a collection of network parameters, and $\x$ is the data. The stochastic gradient descent (SGD) with the ReLinear algorithm at the $t$-th step is as follows:

i) select a set of samples from the dataset, denoted as $\X_t = \{\x_m\}_{m=1}^{b_t}$ and compute the gradient accordingly
\begin{equation}
    \boldsymbol{g}\left(\boldsymbol{\Xi}_{t}, \mathbf{X}_{t}\right)=\frac{1}{b_{t}} \sum_{i=1}^{b_{t}} \nabla L\left(\boldsymbol{\Xi}_{t}, \boldsymbol{x}^{i}\right).
\end{equation}

ii) update the network parameters,
\begin{equation}
  \Xi_{t+1}=\Xi_{t}-\boldsymbol{\eta}_{t} \boldsymbol{g}\left(\Xi_{t}, \mathbf{X}_{t}\right),
\end{equation}
where $\boldsymbol{\eta}_{t}$ is a diagonal matrix whose diagonal entries are the
learning rates of different parameters. Such a setting complies
with the ReLinear that assigns $(\bm{W}^r, \bm{b}^r)$ and $(\bm{W}^g, \bm{b}^g, \W^b, c)$ to different learning rates. The learning rate can be either fixed or
diminished. Without loss of generality, let $\boldsymbol{\eta}_{t}$  diminish along the
training step:
\begin{equation}
    \boldsymbol{\eta}_{t}=\frac{\beta}{\alpha+t} \operatorname{diag}\left(\left[\mathbf{1}_{K_{1}}, \gamma \cdot \mathbf{1}_{K_{2}}\right]\right),
    \label{adaptive}
\end{equation}
where $\mathbf{1}_{K}$ is an all-ones vector of the length $K$, $\alpha, \beta, \gamma$ are
auxiliary parameters, $K_1$ and $K_2$ represent the number of trainable parameters in $(\bm{W}^r, \bm{b}^r)$ and $(\bm{W}^g, \bm{b}^g, \W^b, c)$, respectively. $\boldsymbol{\eta}_{t}$
fulfills that $\sum_j^\infty (\boldsymbol{\eta}_{t})_{jj}$ diverges and $\sum_j^\infty (\boldsymbol{\eta}_{t})_{jj}^2$
converges.

\textbf{Assumptions}. We adopt the assumptions that are often made in proving convergence of optimization algorithms.

\begin{enumerate}[label=(\textbf{H{{\arabic*}}})]
\item $L$ is $M$ -smooth, i.e.,  $L$ is differentiable and  $\| \nabla L\left(\boldsymbol{\Xi}_{1}\right)- \nabla L\left(\boldsymbol{\Xi}_{2}\right)\|\leq M\| \boldsymbol{\Xi}_{1}-\boldsymbol{\Xi}_{2}\|, \quad \forall \boldsymbol{\Xi}_{1}, \boldsymbol{\Xi}_{2} $. Furthermore, $M$-smoothness implies that
\begin{equation}
L\left(\boldsymbol{\Xi}_{2}\right) \leq L\left(\boldsymbol{\Xi}_{1}\right)+\left\langle\nabla L\left(\boldsymbol{\Xi}_{1}\right), \boldsymbol{\Xi}_{2}-\boldsymbol{\Xi}_{1}\right\rangle+\frac{M}{2}\left\|\boldsymbol{\Xi}_{2}-\boldsymbol{\Xi}_{1}\right\|^{2}
\end{equation}

\item $L$  is  $\Theta$ -Lipschitz, i.e.,  $\left|L\left(\boldsymbol{\Xi}_{1}\right)-L\left(\boldsymbol{\Xi}_{2}\right)\right| \leq \Theta \| \boldsymbol{\Xi}_{1}-   \boldsymbol{\Xi}_{2} \|, \forall \boldsymbol{\Xi}_{1}, \boldsymbol{\Xi}_{2} $.

\item The expectation of the stochastic gradient is equal to the true gradient:  $\mathbb{E}_{\mathbf{X}}[\boldsymbol{g}(\boldsymbol{\Xi}, \mathbf{X})]=\nabla L(\boldsymbol{\Xi}) , \forall \boldsymbol{\Xi}$.

\item The noise in the stochastic gradient has bounded support, i.e.,  $\left\|\boldsymbol{g}\left(\boldsymbol{\Xi}_{t}, \mathbf{X}_{t}\right)-\nabla L\left(\boldsymbol{\Xi}_{t}\right)\right\| \leq S, \forall \boldsymbol{\Xi}_{t}, \mathbf{X}_{t} $. Furthermore, we have
\begin{equation}
\begin{aligned}
&\left\|\boldsymbol{\eta}_{t}\left(\boldsymbol{g}\left(\boldsymbol{\Xi}_{t}, \mathbf{X}_{t}\right)-\nabla L\left(\boldsymbol{\Xi}_{t}\right)\right)\right\|^{2} \\
=&\left\|\boldsymbol{\eta}_{t} \boldsymbol{g}\left(\boldsymbol{\Xi}_{t}, \mathbf{X}_{t}\right)\right\|^{2}+\left\|\boldsymbol{\eta}_{t} \nabla L\left(\boldsymbol{\Xi}_{t}\right)\right\|^{2}\\
&-2\left\langle\boldsymbol{\eta}_{t} \boldsymbol{g}\left(\boldsymbol{\Xi}_{t}, \mathbf{X}_{t}\right), \boldsymbol{\eta}_{t} \nabla L\left(\boldsymbol{\Xi}_{t}\right)\right\rangle \\
\leq &\left\|\boldsymbol{\eta}_{t}\right\|^{2}\left\|\left(\boldsymbol{g}\left(\boldsymbol{\Xi}_{t}, \mathbf{X}_{t}\right)-\nabla L\left(\boldsymbol{\Xi}_{t}\right)\right)\right\|^{2}.
\end{aligned}
\end{equation}
Taking the expectation of the above equation leads to
\begin{equation}
\mathbb{E}\left[\left\|\boldsymbol{\eta}_{t} \boldsymbol{g}\left(\boldsymbol{\Xi}_{t}, \mathbf{X}_{t}\right)\right\|^{2}\right] \leq \mathbb{E}\left[\left\|\boldsymbol{\eta}_{t} \nabla L\left(\boldsymbol{\Xi}_{t}\right)\right\|^{2}\right]+\left\|\boldsymbol{\eta}_{t}\right\|^{2} S^{2}.
\label{eqn:gradient_bound}
\end{equation}
\end{enumerate}

\begin{lemma} \cite{alber1998projected}
Let  $\left(a_{t}\right)_{t \geq 1}$  and  $\left(b_{t}\right)_{t \geq 1}$  be two non-negative sequences. Assume that  $\sum_{t=1}^{\infty} a_{t} b_{t}$  converges and  $\sum_{t=1}^{\infty} a_{t}$ diverges, and there exists  $K \geq 0$ such that  $\left|b_{t+1}-b_{t}\right| \leq K a_{t}$, then  $b_{t}$ converges to 0.
\label{lemma:series}
\end{lemma}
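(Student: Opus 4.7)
The plan is a two-stage contradiction, exploiting the convergence of $\sum_t a_t b_t$ against the divergence of $\sum_t a_t$ together with the one-step bound; this is essentially a discrete analogue of Barbalat's lemma. First I would show $\liminf_{t\to\infty} b_t = 0$: if instead $\liminf b_t = \delta > 0$, then $b_t \geq \delta/2$ for all large $t$, so $\sum_t a_t b_t \geq (\delta/2)\sum_t a_t = +\infty$, contradicting the hypothesized convergence.

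Next, I would upgrade this liminf statement to full convergence. Suppose for contradiction that $\limsup_{t\to\infty} b_t > 0$, and pick $\epsilon$ with $0 < 2\epsilon < \limsup b_t$. Since $\liminf b_t = 0$, the trajectory $(b_t)$ crosses the band $[\epsilon, 2\epsilon]$ infinitely often, which lets me extract disjoint index blocks $[m_k, n_k]$ satisfying $b_{m_k} \leq \epsilon$, $b_{n_k} \geq 2\epsilon$, and $b_t > \epsilon$ strictly between them, by taking $m_k$ as the last index before $n_k$ where $b$ drops to $\epsilon$ or below. Telescoping the one-step bound then yields
\begin{equation}
\epsilon \;\leq\; b_{n_k} - b_{m_k} \;=\; \sum_{t=m_k}^{n_k-1}(b_{t+1}-b_t) \;\leq\; K\sum_{t=m_k}^{n_k-1} a_t,
\end{equation}
so $\sum_{t=m_k}^{n_k-1} a_t \geq \epsilon/K$. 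Combined with $b_t > \epsilon$ on the interior of the block, this produces a contribution of order $\epsilon^2/K$ to $\sum_t a_t b_t$ per block, and summing over infinitely many disjoint blocks forces $\sum_t a_t b_t = +\infty$, the desired contradiction. Hence $\limsup b_t = 0$, and combined with $b_t \geq 0$ we conclude $b_t \to 0$.

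The main obstacle is the block-extraction in the second step: I must guarantee both disjointness of the $[m_k, n_k]$ and the pointwise lower bound $b_t > \epsilon$ on the interiors. An alternating stopping-time construction — starting from $m_k$, ascend until $b$ first exceeds $2\epsilon$ (this defines $n_k$), then descend until $b$ first falls back below $\epsilon$ (this defines $m_{k+1}$), and iterate — produces disjoint blocks automatically and validates the pointwise bound by construction. The residual boundary contribution from $a_{m_k}$ is absorbed by noting that any uniform positive lower bound per block suffices to force divergence; the precise value $\epsilon^2/K$ need not be preserved after boundary adjustments, only its positivity.
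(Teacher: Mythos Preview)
The paper does not prove this lemma; it simply defers to Lemma~2 of \cite{li2019convergence}. Your two-stage strategy---first $\liminf b_t=0$ by comparison with $\sum a_t$, then $\limsup b_t=0$ via crossing blocks---is exactly the standard route used in those references, and the $\liminf$ step is clean.

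In the $\limsup$ step, however, your handling of the boundary index $m_k$ is not sound. Your telescoping controls $\sum_{t=m_k}^{n_k-1}a_t\ge \epsilon/K$, while the pointwise bound $b_t>\epsilon$ holds only on $\{m_k+1,\dots,n_k\}$; the two index sets differ at $m_k$. The contribution you can actually certify over the interior is $\epsilon\bigl(\epsilon/K-a_{m_k}\bigr)$, and nothing in the hypotheses keeps $a_{m_k}$ small (take $b_{m_k}=0$, $b_{m_k+1}>2\epsilon$, and $a_{m_k}$ arbitrarily large). So your assertion that ``a uniform positive lower bound per block'' survives the boundary adjustment is not justified---it can be negative. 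A related slip: the alternating stopping-time construction you describe at the end does not by itself force $b_t>\epsilon$ on $(m_k,n_k)$; between $m_k$ and $n_k$ the sequence may dip below $\epsilon$ again, so you still need the ``last drop before $n_k$'' refinement.

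The fix is immediate and uses blocks your construction already produces: analyse the \emph{descent} intervals $[n_k,\,m_{k+1}-1]$ instead. There $b_t\ge\epsilon$ for every index in the block (by minimality of $m_{k+1}$), and the telescoping bound $b_{n_k}-b_{m_{k+1}}\ge\epsilon$ controls $\sum_{t=n_k}^{m_{k+1}-1}a_t\ge\epsilon/K$ on the \emph{same} index set. This yields a clean lower bound $\epsilon^2/K$ per block with no boundary leakage, and the rest of your argument goes through verbatim.
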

\begin{proof}
The proof can be referred to Lemma 2 of \cite{li2019convergence}.
\end{proof} 

\begin{lemma}
Assume  (\textbf{H1}, \textbf{H3}, \textbf{H4}) , the iterates of  SGD  with a learning rate diagonal matrix  $\boldsymbol{\eta}_{t}$  satisfying  Eq. \eqref{adaptive}  and  $0 \leq \left(\boldsymbol{\eta}_{t}\right)_{j j} \leq \frac{1}{M}$, it holds that
\begin{equation}
\begin{aligned}
& \mathbb{E}[\sum_{t=1}^{T}\left\langle\nabla L\left(\boldsymbol{\Xi}_{t}\right), \boldsymbol{\eta}_{t} \nabla L\left(\boldsymbol{\Xi}_{t}\right)\right\rangle] \\
\leq & 2(\mathbb{E}\left[L\left(\boldsymbol{\Xi}_{1}\right)-L\left(\boldsymbol{\Xi}_{T+1}\right)\right]+\frac{M S^{2}}{2} \sum_{t=1}^{T}\left\|\boldsymbol{\eta}_{t}\right\|^{2}).
\end{aligned}
\end{equation}
\label{lemma:key}
\end{lemma}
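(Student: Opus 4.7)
The plan is to derive a descent-type inequality for one SGD step under H1, then take conditional expectations using H3--H4, absorb the squared gradient term via the learning-rate bound $(\boldsymbol{\eta}_t)_{jj}\leq 1/M$, and finally telescope over $t=1,\dots,T$.

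First I would apply the $M$-smoothness descent inequality from H1 to the pair $\boldsymbol{\Xi}_{t+1},\boldsymbol{\Xi}_t$, substitute the SGD update $\boldsymbol{\Xi}_{t+1}-\boldsymbol{\Xi}_t=-\boldsymbol{\eta}_t\,\boldsymbol{g}(\boldsymbol{\Xi}_t,\mathbf{X}_t)$, and obtain
\begin{equation}
L(\boldsymbol{\Xi}_{t+1})\leq L(\boldsymbol{\Xi}_t)-\langle \nabla L(\boldsymbol{\Xi}_t),\boldsymbol{\eta}_t\,\boldsymbol{g}(\boldsymbol{\Xi}_t,\mathbf{X}_t)\rangle+\tfrac{M}{2}\|\boldsymbol{\eta}_t\,\boldsymbol{g}(\boldsymbol{\Xi}_t,\mathbf{X}_t)\|^2.
\end{equation}
Taking expectation conditional on $\boldsymbol{\Xi}_t$ and using H3 turns the cross term into $\langle \nabla L(\boldsymbol{\Xi}_t),\boldsymbol{\eta}_t\nabla L(\boldsymbol{\Xi}_t)\rangle$. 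Then invoking the bound from H4 (equation \eqref{eqn:gradient_bound}) yields $\mathbb{E}\|\boldsymbol{\eta}_t\boldsymbol{g}\|^2\leq\mathbb{E}\|\boldsymbol{\eta}_t\nabla L\|^2+\|\boldsymbol{\eta}_t\|^2S^2$.

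The key technical move, and the one I expect to be the main obstacle, is to show that $\tfrac{M}{2}\|\boldsymbol{\eta}_t\nabla L\|^2$ can be absorbed into $\tfrac{1}{2}\langle \nabla L,\boldsymbol{\eta}_t\nabla L\rangle$. This is where the hypothesis $0\leq(\boldsymbol{\eta}_t)_{jj}\leq 1/M$ is decisive: since $\boldsymbol{\eta}_t$ is diagonal with nonnegative entries, componentwise we have $(\boldsymbol{\eta}_t)_{jj}^2\leq\tfrac{1}{M}(\boldsymbol{\eta}_t)_{jj}$, so
\begin{equation}
\|\boldsymbol{\eta}_t\nabla L(\boldsymbol{\Xi}_t)\|^2=\sum_j (\boldsymbol{\eta}_t)_{jj}^2\,[\nabla L(\boldsymbol{\Xi}_t)]_j^2\leq \tfrac{1}{M}\langle \nabla L(\boldsymbol{\Xi}_t),\boldsymbol{\eta}_t\nabla L(\boldsymbol{\Xi}_t)\rangle.
\end{equation}
Substituting this back and taking full expectation produces the one-step bound
\begin{equation}
\mathbb{E} L(\boldsymbol{\Xi}_{t+1})\leq \mathbb{E} L(\boldsymbol{\Xi}_t)-\tfrac{1}{2}\mathbb{E}\langle \nabla L(\boldsymbol{\Xi}_t),\boldsymbol{\eta}_t\nabla L(\boldsymbol{\Xi}_t)\rangle+\tfrac{MS^2}{2}\|\boldsymbol{\eta}_t\|^2.
\end{equation}

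Finally I would telescope this inequality over $t=1,\dots,T$, isolate the sum involving $\langle\nabla L,\boldsymbol{\eta}_t\nabla L\rangle$ on one side, and multiply through by $2$, which gives exactly the claimed inequality. Note that the diminishing rate prescription in \eqref{adaptive} is not actually needed for this particular bound (it will matter later, for the $t\to\infty$ convergence argument via Lemma \ref{lemma:series}); for the present statement, only the componentwise cap $(\boldsymbol{\eta}_t)_{jj}\leq 1/M$ is used, which is the same mechanism by which classical SGD proofs accommodate a nonuniform learning rate per coordinate block, matching the block structure imposed by ReLinear.
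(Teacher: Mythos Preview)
Your proposal is correct and follows essentially the same route as the paper: apply the $M$-smooth descent inequality, use H3 to turn the cross term into $\langle\nabla L,\boldsymbol{\eta}_t\nabla L\rangle$, bound the squared stochastic-gradient term via \eqref{eqn:gradient_bound}, absorb $\tfrac{M}{2}\|\boldsymbol{\eta}_t\nabla L\|^2$ using $(\boldsymbol{\eta}_t)_{jj}\leq 1/M$ (the paper phrases this as $\boldsymbol{\eta}_t-\tfrac{M}{2}\boldsymbol{\eta}_t^2\geq\tfrac{1}{2}\boldsymbol{\eta}_t$, which is equivalent to your componentwise inequality), and telescope. Your remark that \eqref{adaptive} is not actually used in this lemma is also accurate.
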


\begin{proof}
According to (\textbf{H1}), we have
\begin{equation}
\begin{aligned}
& L\left(\boldsymbol{\Xi}_{t+1}\right) \\
\leq & L\left(\boldsymbol{\Xi}_{t}\right)+\left\langle\nabla L\left(\boldsymbol{\Xi}_{t}\right), \boldsymbol{\Xi}_{t+1}-\boldsymbol{\Xi}_{t}\right\rangle+\frac{M}{2}\left\|\boldsymbol{\Xi}_{t+1}-\boldsymbol{\Xi}_{t}\right\|^{2} \\
\leq & L\left(\boldsymbol{\Xi}_{t}\right)-\left\langle\nabla L\left(\boldsymbol{\Xi}_{t}\right), \boldsymbol{\eta}_{t} \boldsymbol{g}\left(\boldsymbol{\Xi}_{t}, \mathbf{X}_{t}\right)\right\rangle+\frac{M}{2}\left\|\boldsymbol{\Xi}_{t+1}-\boldsymbol{\Xi}_{t}\right\|^{2} \\
=& L\left(\boldsymbol{\Xi}_{t}\right)+\left\langle\nabla L\left(\boldsymbol{\Xi}_{t}\right), \boldsymbol{\eta}_{t}\left(\nabla L\left(\boldsymbol{\Xi}_{t}\right)-g\left(\boldsymbol{\Xi}_{t}, \mathbf{X}_{t}\right)\right)\right\rangle \\
&-\left\langle\nabla L\left(\boldsymbol{\Xi}_{t}\right), \boldsymbol{\eta}_{t} \nabla L\left(\boldsymbol{\Xi}_{t}\right)\right\rangle+\frac{M}{2}\left\|\boldsymbol{\eta}_{t} \boldsymbol{g}\left(\boldsymbol{\Xi}_{t}, \boldsymbol{x}_{t}\right)\right\|^{2}
\end{aligned}
\label{eqn:basic}
\end{equation}

Taking the conditional expectation with respect to  $\mathbf{X}_{1}, \mathbf{X}_{2}, \cdots, \mathbf{X}_{t-1}$ , due to (\textbf{H3}), we have
\begin{equation}
    \mathbb{E}_{\mathbf{X}_{t}}\left[g\left(\boldsymbol{\Xi}_{t}, \mathbf{X}_{t}\right)\right]=\nabla L\left(\boldsymbol{\Xi}_{t}\right).
\end{equation}

Then, taking the expectation for Eq. \eqref{eqn:basic} and applying Eq. \eqref{eqn:gradient_bound}, we have
\begin{equation}
    \begin{aligned}
& \mathbb{E}\left[\left\langle\nabla L\left(\boldsymbol{\Xi}_{t}\right), \boldsymbol{\eta}_{t} \nabla L\left(\boldsymbol{\Xi}_{t}\right)\right\rangle\right] \\
\leq & \mathbb{E}\left[L\left(\boldsymbol{\Xi}_{t}\right)-L\left(\boldsymbol{\Xi}_{t+1}\right)\right]+\frac{M}{2} \mathbb{E}[\left\|\boldsymbol{\eta}_{t} g\left(\boldsymbol{\Xi}_{t}, \boldsymbol{x}_{t}\right)\right\|^{2}] \\
\leq &\mathbb{E}\left[L\left(\boldsymbol{\Xi}_{t}\right)-L\left(\boldsymbol{\Xi}_{t+1}\right)\right]+\frac{M}{2}(\mathbb{E}[\left\|\boldsymbol{\eta}_{t} \nabla L\left(\boldsymbol{\Xi}_{t}\right)\right\|^{2}]+\left\|\boldsymbol{\eta}_{t}\right\|^{2} S^{2}) .
\end{aligned}
\end{equation}

Moving  $\frac{M}{2} \mathbb{E}[\left\|\boldsymbol{\eta}_{t} \nabla L\left(\boldsymbol{\Xi}_{t}\right)\right\|^{2}]$  to the left leads to
\begin{equation}
\begin{aligned}
& \mathbb{E}[\langle\nabla L\left(\boldsymbol{\Xi}_{t}\right),(\boldsymbol{\eta}_{t}-\frac{M}{2} \boldsymbol{\eta}_{t}^{2}) \nabla L\left(\boldsymbol{\Xi}_{t}\right)\rangle] \\
\leq & \mathbb{E}\left[L\left(\boldsymbol{\Xi}_{t}\right)-L\left(\boldsymbol{\Xi}_{t+1}\right)\right]+\frac{M S^{2}\left\|\boldsymbol{\eta}_{t}\right\|^{2}}{2}.
\end{aligned}
\end{equation}
Because $0 \leq\left(\boldsymbol{\eta}_{t}\right)_{j j} \leq \frac{1}{M}$, we have $\boldsymbol{\eta}_{t}-\frac{M}{2} \boldsymbol{\eta}_{t}^{2} \geq \frac{1}{2} \boldsymbol{\eta}_{t}$. Then,
\begin{equation}
\begin{aligned}
& \mathbb{E}[\langle\nabla L\left(\boldsymbol{\Xi}_{t}\right), \frac{\boldsymbol{\eta}_{t}}{2} \nabla L\left(\boldsymbol{\Xi}_{t}\right)\rangle] \\
\leq & \mathbb{E}\large[\langle\nabla L\left(\boldsymbol{\Xi}_{t}\right),(\boldsymbol{\eta}_{t}-\frac{M}{2} \boldsymbol{\eta}_{t}^{2}) \nabla L\left(\boldsymbol{\Xi}_{t}\right)\rangle\large] \\
\leq & \mathbb{E}\left[L\left(\boldsymbol{\Xi}_{t}\right)-L\left(\boldsymbol{\Xi}_{t+1}\right)\right]+\frac{M S^{2}\left\|\boldsymbol{\eta}_{t}\right\|^{2}}{2}
\end{aligned}
\end{equation}
Summing the above from  $t=1$ to $T$, we conclude the proof.
\end{proof}

\begin{prop}[Convergence Behavior] Assume (\textbf{H1},\textbf{H2},\textbf{H3},\textbf{H4}). Suppose that the learning rates $\boldsymbol{\eta}_{t}$  are given by Eq. \eqref{adaptive} and  $0 \leq\left(\boldsymbol{\eta}_{t}\right)_{j j} \leq \frac{1}{M}$, then the gradients of SGD converge to zero almost surely.
\end{prop}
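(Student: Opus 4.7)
The plan is to combine the energy-type estimate from Lemma \ref{lemma:key} with the deterministic series criterion of Lemma \ref{lemma:series} to conclude that $\|\nabla L(\boldsymbol{\Xi}_t)\|^2 \to 0$ almost surely. The guiding observation is that, because $\boldsymbol{\eta}_t$ is diagonal with strictly positive entries bounded below by $\eta_{\min,t} := \beta\min(1,\gamma)/(\alpha+t)$, one has $\langle \nabla L(\boldsymbol{\Xi}_t),\boldsymbol{\eta}_t\nabla L(\boldsymbol{\Xi}_t)\rangle \geq \eta_{\min,t}\|\nabla L(\boldsymbol{\Xi}_t)\|^2$. Setting $a_t = \eta_{\min,t}$ and $b_t = \|\nabla L(\boldsymbol{\Xi}_t)\|^2$, it therefore suffices to verify the two hypotheses of Lemma \ref{lemma:series}: that $\sum_t a_t = \infty$, that $\sum_t a_t b_t < \infty$ almost surely, and that $|b_{t+1}-b_t| \leq K a_t$ for some deterministic $K$.

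First, I would record that the learning-rate schedule \eqref{adaptive} immediately gives $\sum_t a_t = \infty$ (harmonic) and $\sum_t \|\boldsymbol{\eta}_t\|^2 < \infty$. Assuming additionally that $L$ is bounded below by some $L_\star$ (a standard hypothesis, implicit in the telescoping in Lemma \ref{lemma:key}), applying Lemma \ref{lemma:key} and letting $T\to\infty$ yields
\begin{equation}
\sum_{t=1}^{\infty} \mathbb{E}\bigl[\langle \nabla L(\boldsymbol{\Xi}_t),\boldsymbol{\eta}_t \nabla L(\boldsymbol{\Xi}_t)\rangle\bigr] \leq 2\bigl(\mathbb{E}[L(\boldsymbol{\Xi}_1)] - L_\star\bigr) + M S^2 \sum_{t=1}^{\infty}\|\boldsymbol{\eta}_t\|^2 < \infty.
\end{equation}
Fubini/Tonelli then gives $\mathbb{E}\bigl[\sum_t a_t b_t\bigr] < \infty$, so $\sum_t a_t b_t < \infty$ almost surely, which is the integrability hypothesis needed.

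The main technical step, and in my view the principal obstacle, is to establish the uniform Lipschitz-type bound $|b_{t+1}-b_t| \leq K a_t$ on a set of full measure. I would proceed as follows. By (H2), $\|\nabla L(\boldsymbol{\Xi}_t)\| \leq \Theta$, and by (H4) together with (H3), $\|\boldsymbol{g}(\boldsymbol{\Xi}_t,\mathbf{X}_t)\| \leq \Theta + S$ almost surely. Using the $M$-smoothness (H1) and the update rule,
\begin{equation}
\bigl|b_{t+1}-b_t\bigr| \leq \bigl(\|\nabla L(\boldsymbol{\Xi}_{t+1})\|+\|\nabla L(\boldsymbol{\Xi}_t)\|\bigr)\,\|\nabla L(\boldsymbol{\Xi}_{t+1})-\nabla L(\boldsymbol{\Xi}_t)\| \leq 2\Theta M \|\boldsymbol{\eta}_t \boldsymbol{g}(\boldsymbol{\Xi}_t,\mathbf{X}_t)\|.
\end{equation}
Since $\|\boldsymbol{\eta}_t\| \leq \beta\max(1,\gamma)/(\alpha+t)$ and $\|\boldsymbol{g}\|\leq\Theta+S$, this yields $|b_{t+1}-b_t| \leq K a_t$ with $K = 2\Theta M(\Theta+S)\max(1,\gamma)/\min(1,\gamma)$, a deterministic constant. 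The delicate point here is making sure the bound is uniform in $\omega$, which is why one needs the bounded-noise assumption (H4) rather than merely bounded variance; this is where the proof would most easily go wrong if (H4) were weakened.

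Finally, Lemma \ref{lemma:series} applies pathwise on the almost-sure event $\{\sum_t a_t b_t < \infty\}$, giving $b_t = \|\nabla L(\boldsymbol{\Xi}_t)\|^2 \to 0$ almost surely, which is the claimed convergence. I would close with a brief remark that the rate one obtains, by optimizing the finite-$T$ estimate $\min_{t\leq T}\mathbb{E}\|\nabla L(\boldsymbol{\Xi}_t)\|^2 \cdot \sum_{t\leq T} a_t \leq 2(\mathbb{E}[L(\boldsymbol{\Xi}_1)]-L_\star) + MS^2\sum_{t\leq T}\|\boldsymbol{\eta}_t\|^2$, is $\mathcal{O}(1/\log T)$ under the schedule \eqref{adaptive}, matching the standard SGD rate for non-convex smooth objectives.
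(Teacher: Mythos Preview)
Your proposal is correct and follows essentially the same route as the paper: invoke Lemma~\ref{lemma:key} to obtain almost-sure summability of $\sum_t a_t b_t$, bound $|b_{t+1}-b_t|$ via $\Theta$-Lipschitzness, $M$-smoothness, and the bounded-support assumption (H4), then apply Lemma~\ref{lemma:series}. The only stylistic difference is that the paper carries out the argument coordinate-by-coordinate (taking $a_t=(\boldsymbol{\eta}_t)_{jj}$ and $b_t=|(\nabla L(\boldsymbol{\Xi}_t))_j|^2$ for each $j$), whereas you work directly with $a_t=\eta_{\min,t}$ and $b_t=\|\nabla L(\boldsymbol{\Xi}_t)\|^2$; your version is slightly more streamlined and also makes explicit the boundedness-below of $L$, which the paper uses implicitly via $L^*$.
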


\begin{proof}
Use Lemma \ref{lemma:key} and take the limit for  $T \rightarrow \infty $,
\begin{equation}
\begin{aligned}
& \mathbb{E}\large[\sum_{t=1}^{\infty}\left\langle\nabla L\left(\boldsymbol{\Xi}_{t}\right), \boldsymbol{\eta}_{t} \nabla L\left(\boldsymbol{\Xi}_{t}\right)\right\rangle\large] \\
\leq & 2\large(\mathbb{E}\left[L\left(\boldsymbol{\Xi}_{1}\right)-L^{*}\right]+\frac{M S^{2}}{2} \sum_{t=1}^{\infty}\left\|\boldsymbol{\eta}_{t}\right\|^{2}\large)<\infty.
\end{aligned}
\end{equation}
Since if $E[Y]<\infty$ , with probability  1, $Y<\infty$, we have
\begin{equation}
    \sum_{t=1}^{\infty}\left\langle\nabla L\left(\boldsymbol{\Xi}_{t}\right), \boldsymbol{\eta}_{t} \nabla L\left(\boldsymbol{\Xi}_{t}\right)\right\rangle<\infty,
\end{equation}
which naturally gives rise to
\begin{equation}
    \sum_{t=1}^{\infty}\left(\boldsymbol{\eta}_{t}\right)_{j j}|\nabla L\left(\boldsymbol{\Xi}_{t}\right)_{j}|^{2}<\infty .
    \label{eqn:bound1}
\end{equation}

Using the fact that  $L$  is  $\Theta$-Lipschitz and $M$-smooth,
\begin{equation}
\begin{aligned}
&|(\left(\nabla L\left(\boldsymbol{\Xi}_{t+1}\right)\right)_{j})^{2}-\left(\left(\nabla L(\boldsymbol{\Xi}_{t}\right)\right)_{j})^{2}| \\
=&(\left(\nabla L\left(\boldsymbol{\Xi}_{t+1}\right)\right)_{j}+\left(\nabla L\left(\boldsymbol{\Xi}_{t}\right)\right)_{j})|\left(\nabla L\left(\boldsymbol{\Xi}_{t+1}\right)\right)_{j}-\left(\nabla L\left(\boldsymbol{\Xi}_{t}\right)\right)_{j}| \\
\leq & 2 \Theta|\left(\nabla L\left(\boldsymbol{\Xi}_{t+1}\right)\right)_{j}-\left(\nabla L\left(\boldsymbol{\Xi}_{t}\right)\right)_{j}| \\
\leq & 2 \Theta M\left\|\boldsymbol{\Xi}_{t+1}-\boldsymbol{\Xi}_{t}\right\| \\
=& 2 \Theta M\left\|\boldsymbol{\eta}_{t} \boldsymbol{g}\left(\boldsymbol{\Xi}_{t}, \mathbf{X}_{t}\right)\right\| \leq 2 \Theta M\left\|\boldsymbol{\eta}_{t}\right\|\left\|\boldsymbol{g}\left(\boldsymbol{\Xi}_{t}, \mathbf{X}_{t}\right)\right\| \\
\leq & 2 \Theta M(\Theta+S) \frac{\sqrt{K_{1}+K_{2}}}{\gamma}\left(\boldsymbol{\eta}_{t}\right)_{j j},
\end{aligned}
\label{eqn:bound2}
\end{equation}
where the first inequality is due to that $L$ is $\Theta$ -Lipschitz, i.e.,  $|\nabla L(\boldsymbol{\Xi})| \leq \Theta$, and the last inequality is because
\begin{equation}
    \left\|\boldsymbol{\eta}_{t}\right\| \leq \sqrt{K_{1}+K_{2}}\left(\boldsymbol{\eta}_{t}\right)_{\max } \leq \frac{\sqrt{K_{1}+K_{2}}}{\gamma}\left(\boldsymbol{\eta}_{t}\right)_{j j}, \forall j.
\end{equation}
According to Lemma \ref{lemma:series}, combining Eqs. \eqref{eqn:bound1} and  \eqref{eqn:bound2}  as well as the fact that $\sum_{t}^{\infty}\left(\boldsymbol{\eta}_{t}\right)_{jj}$  diverges, we obtain that almost surely,
\begin{equation}
    \lim_{t \rightarrow \infty}(\left(\nabla L\left(\boldsymbol{\Xi}_{t}\right)\right)_{j})^{2}=0 .
\end{equation}

\end{proof}

\begin{prop}[Convergence Rate] Assume (\textbf{H1},\textbf{H2},\textbf{H3},\textbf{H4}). Suppose that the learning rate diagonal matrix  $\boldsymbol{\eta}_{t}$  is given by Eq. \eqref{adaptive} and  $0 \leq\left(\boldsymbol{\eta}_{t}\right)_{j j} \leq \frac{1}{M} $, then the iterates of SGD satisfy the following bound:
\begin{equation}
    \mathbb{E}[\min _{1 \leq t \leq T}\left\|\nabla L\left(\boldsymbol{\Xi}_{t}\right)\right\|] \leq T^{-1/2} \mathcal{O}(1).
\end{equation}
\end{prop}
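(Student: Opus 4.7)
The plan is to combine Lemma~\ref{lemma:key} with a weighted-minimum inequality and Jensen's inequality. First, I instantiate Lemma~\ref{lemma:key} and show that its right-hand side is bounded by a universal constant independent of $T$. Assumptions (\textbf{H1}) and (\textbf{H2}) together with the fact that $L$ is bounded below on the iterate trajectory imply that $\mathbb{E}[L(\boldsymbol{\Xi}_1)-L(\boldsymbol{\Xi}_{T+1})]\le L(\boldsymbol{\Xi}_1)-L^*=:F_0<\infty$ uniformly in $T$. Under the schedule in Eq.~\eqref{adaptive}, $\|\boldsymbol{\eta}_t\|^2\le (K_1+K_2)\beta^2\max(1,\gamma^2)/(\alpha+t)^2$, so $S_\eta:=\sum_{t\ge 1}\|\boldsymbol{\eta}_t\|^2<\infty$. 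Thus the RHS of Lemma~\ref{lemma:key} is bounded by the absolute constant $C:=2F_0+MS^2 S_\eta$.

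Next, I lower-bound the LHS of Lemma~\ref{lemma:key} by extracting the smallest diagonal entry of $\boldsymbol{\eta}_t$, namely $\eta_{\min,t}:=\beta\min(1,\gamma)/(\alpha+t)$, to obtain $\langle\nabla L(\boldsymbol{\Xi}_t),\boldsymbol{\eta}_t\nabla L(\boldsymbol{\Xi}_t)\rangle\ge \eta_{\min,t}\|\nabla L(\boldsymbol{\Xi}_t)\|^2$. Summing and pulling the minimum out of the weighted sum gives
\begin{equation*}
\Big(\sum_{t=1}^T\eta_{\min,t}\Big)\,\mathbb{E}\!\left[\min_{1\le t\le T}\|\nabla L(\boldsymbol{\Xi}_t)\|^2\right]\le \mathbb{E}\!\left[\sum_{t=1}^T\eta_{\min,t}\|\nabla L(\boldsymbol{\Xi}_t)\|^2\right]\le C.
\end{equation*}
Jensen's inequality applied to the concave square root then yields
\begin{equation*}
\mathbb{E}\!\left[\min_{1\le t\le T}\|\nabla L(\boldsymbol{\Xi}_t)\|\right]\le \sqrt{\mathbb{E}\!\left[\min_{1\le t\le T}\|\nabla L(\boldsymbol{\Xi}_t)\|^2\right]}\le \sqrt{C\Big/\sum_{t=1}^T\eta_{\min,t}}.
\end{equation*}

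The entire rate therefore reduces to how fast $\sum_{t=1}^T\eta_{\min,t}$ grows, and this is where the main obstacle lies. The literal schedule $\eta_t=\beta/(\alpha+t)$ in Eq.~\eqref{adaptive} only gives $\sum_{t=1}^T\eta_{\min,t}=\Theta(\log T)$, which yields the weaker $\mathcal{O}(1/\sqrt{\log T})$ rate rather than $T^{-1/2}$. To recover the claimed $T^{-1/2}$, I would either (i) replace the diminishing rule by a horizon-dependent constant step $\boldsymbol{\eta}_t\equiv \Theta(1/\sqrt{T})$, which still satisfies $0\le(\boldsymbol{\eta}_t)_{jj}\le 1/M$ for $T$ large, keeps $\sum_t\|\boldsymbol{\eta}_t\|^2=\Theta(1)$ so that $C$ remains absolute, and makes $\sum_t\eta_{\min,t}=\Theta(\sqrt{T})$; or (ii) carry out the block-refined argument of Li \emph{et al.}~\cite{li2019convergence}, which splits the cross-term $\langle\nabla L,\boldsymbol{\eta}_t\nabla L\rangle$ according to the partition $(K_1,K_2)$ and exploits the decoupled structure of the two block learning rates, thereby absorbing an additional $\sqrt{\sum_t\eta_{\min,t}}$ factor through a Cauchy--Schwarz step of the form $\sum_t\eta_t\|\nabla L\|\le (\sum_t\eta_t\|\nabla L\|^2)^{1/2}(\sum_t\eta_t)^{1/2}$. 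Either route closes the argument with the rate $\mathbb{E}[\min_{1\le t\le T}\|\nabla L(\boldsymbol{\Xi}_t)\|]\le T^{-1/2}\mathcal{O}(1)$, and the constant in $\mathcal{O}(1)$ depends only on $F_0$, $M$, $S$, $\beta$, $\gamma$, $K_1$, $K_2$.
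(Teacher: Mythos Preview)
Your overall strategy matches the paper's: invoke Lemma~\ref{lemma:key}, lower-bound the weighted quadratic form by the smallest learning-rate entry, and then pass to the minimum gradient norm via a square-root/Jensen step. The paper carries this out slightly differently: it pulls out the single global minimum $\eta_{T,\min}$ (the smallest diagonal entry at the last step) rather than your per-step $\eta_{\min,t}$, writes $\Delta=\sum_{t=1}^T\|\nabla L(\boldsymbol{\Xi}_t)\|^2$, and uses $T^{1/2}\,\mathbb{E}[\min_t\|\nabla L(\boldsymbol{\Xi}_t)\|]\le \mathbb{E}[\sqrt{\Delta}]\le\sqrt{A+B}$ with $A=\eta_{T,\min}^{-1}\mathbb{E}[L(\boldsymbol{\Xi}_1)-L(\boldsymbol{\Xi}_{T+1})]$ and $B=\tfrac{MS^2}{2}\eta_{T,\min}^{-1}\sum_t\|\boldsymbol{\eta}_t\|^2$, and then asserts $\sqrt{A+B}=\mathcal{O}(1)$.

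You have correctly put your finger on the substantive issue. Under the literal $1/(\alpha+t)$ schedule of Eq.~\eqref{adaptive}, $\eta_{T,\min}\sim 1/T$, so the paper's $A$ and $B$ are each $\Theta(T)$ and the asserted $\sqrt{A+B}=\mathcal{O}(1)$ does not hold; the paper's chain then only yields the trivial bound $\mathbb{E}[\min_t\|\nabla L\|]=\mathcal{O}(1)$. Your variant, which keeps $\sum_t\eta_{\min,t}$ in the denominator, is sharper and honestly delivers $\mathcal{O}(1/\sqrt{\log T})$ for that schedule. In other words, the obstacle you flag is real, and the paper's proof simply glosses over it rather than resolving it. Your proposed fix (i), replacing the diminishing rule by a horizon-dependent constant step $\boldsymbol{\eta}_t\equiv\Theta(T^{-1/2})\operatorname{diag}([\mathbf{1}_{K_1},\gamma\mathbf{1}_{K_2}])$, is the clean way to recover $T^{-1/2}$ within this framework; fix (ii) as sketched does not obviously produce an extra factor, so if you pursue that route you should make the decoupling step explicit.
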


\begin{proof}
From Proposition 3, we have
\begin{equation}
    \begin{aligned}
& \mathbb{E}[\sum_{t=1}^{T}\langle\nabla L(\boldsymbol{\Xi}_{t}), \boldsymbol{\eta}_{t} \nabla L\left(\boldsymbol{\Xi}_{t}\right)\rangle ] \\
\leq & 2(\mathbb{E}\left[L\left(\boldsymbol{\Xi}_{1}\right)-L\left(\boldsymbol{\Xi}_{T+1}\right)\right]+\frac{M S^{2}}{2} \sum_{t=1}^{T}\left\|\boldsymbol{\eta}_{t}\right\|^{2}) \\
\end{aligned}
\end{equation}

Let  $\Delta=\sum_{t=1}^{T}\left\|\nabla L\left(\boldsymbol{\Xi}_{t}\right)\right\|^{2} $, we can bound  $\mathbb{E}[\sum_{t=1}^{T}\left\langle\nabla L\left(\boldsymbol{\Xi}_{t}\right), \boldsymbol{\eta}_{t} \nabla L\left(\boldsymbol{\Xi}_{t}\right)\right\rangle]$  by
\begin{equation}
\begin{aligned}
& \mathbb{E}[\sum_{t=1}^{T}\left\langle\nabla L\left(\boldsymbol{\Xi}_{t}\right), \boldsymbol{\eta}_{t} \nabla L\left(\boldsymbol{\Xi}_{t}\right)\right\rangle] \\
\geq & \mathbb{E}\left[\eta_{T, \min } \Delta\right] \geq \eta_{T, \min }(\mathbb{E}[\sqrt{\Delta}])^{2},
\end{aligned}
\end{equation}
where the second inequality is Holder's inequality. Let  $A=   \frac{1}{\eta_{T, \min }} \mathbb{E}\left[L\left(\boldsymbol{\Xi}_{1}\right)-L\left(\boldsymbol{\Xi}_{T+1}\right)\right], B=\frac{M S^{2}}{2 \eta_{T, \min }} \sum_{t=1}^{T}\left\|\boldsymbol{\eta}_{t}\right\|^{2}$. Then,
\begin{equation}
T^{1 / 2} \mathbb{E}[\min _{1 \leq t \leq T}\left\|\nabla L\left(\boldsymbol{\Xi}_{t}\right)\right\|] \leq \mathbb{E}[\sqrt{\Delta}] \leq \sqrt{A+B}=\mathcal{O}(1).
\end{equation}
In other words,  $\mathbb{E}[\underset{1 \leq t \leq T}{\min}\left\|\nabla L\left(\boldsymbol{\Xi}_{t}\right)\right\|] \leq T^{-1 / 2} \mathcal{O}(1) $, which concludes our proof.
\end{proof}
\end{document}